\documentclass{article}

\newif\ifdraft

\PassOptionsToPackage{numbers, compress, sort}{natbib}

\usepackage[final]{neurips_2022}

\usepackage[utf8]{inputenc} %
\usepackage[T1]{fontenc}    %
\usepackage{hyperref}       %
\usepackage{url}            %
\usepackage{booktabs}       %
\usepackage{amsfonts}       %
\usepackage{nicefrac}       %
\usepackage{microtype}      %
\usepackage{xcolor}         %

\usepackage{amsmath}
\usepackage{amsthm}
\usepackage{listings}
\usepackage{algorithm}
\usepackage{algpseudocode}
\usepackage{subcaption}
\usepackage{caption}
\usepackage{graphicx}
\usepackage{tcolorbox}
\usepackage{wrapfig}
\usepackage{mathabx}
\usepackage{thmtools}
\usepackage{thm-restate}

\usepackage{multibib}
\newcites{supp}{References}

\definecolor{todored}{rgb}{0.7,0,0.05}
\definecolor{notegreen}{rgb}{0,0.5,0.1}

\ifdraft
\newcommand\todo[1]{\textcolor{todored}{[TODO: #1]}}
\newcommand\note[1]{\textcolor{notegreen}{[NOTE: #1]}}
\newcommand\lo[1]{\textcolor{teal}{[LO: #1]}}
\else
\newcommand\todo[1]{}
\newcommand\note[1]{}
\newcommand\lo[1]{}
\fi

\newcommand\change[1]{#1}

\definecolor{mydarkblue}{rgb}{0,0.08,0.45}
\hypersetup{ %
pdftitle={},
pdfauthor={},
pdfkeywords={},
pdfborder=0 0 0,
pdfpagemode=UseNone,
colorlinks=true,
linkcolor=red,
citecolor=blue,
filecolor=mydarkblue,
urlcolor=mydarkblue,
pdfview=FitH}

\definecolor{commentgreen}{RGB}{2,112,10}
\definecolor{eminence}{RGB}{108,48,130}
\definecolor{weborange}{RGB}{255,165,0}
\definecolor{frenchplum}{RGB}{129,20,83}

\definecolor{sdvired}{HTML}{D55E00}
\definecolor{pyrogreen}{HTML}{009E73}
\definecolor{bbviblue}{HTML}{56B4E9}

\usepackage{listings}
\lstset {
    language=Python,
    frame=tb,
    tabsize=4,
    showstringspaces=false,
    numbers=left,
    commentstyle=\color{commentgreen},
    keywordstyle=\color{blue},
    stringstyle=\color{red},
    basicstyle=\small\ttfamily, %
    emph={int,char,double,float,unsigned,void,bool},
    emphstyle={\color{blue}},
    escapechar=\&,
    classoffset=1, %
    otherkeywords={sample,observe},
    morekeywords={sample,observe},
    deletekeywords={id},
    keywordstyle=\color{frenchplum},
    classoffset=0,
}

\newcommand\KL{\mathrm{KL}}
\newcommand\ELBO{\mathcal{L}}
\newcommand\qind{q}
\newcommand\qslp{q_{k}}
\newcommand\tildeqslp{\tilde{q}_{k}}
\newcommand\branchind{I_{{\rm branch}}}
\newcommand\nonbranchind{I'}
\newcommand\tildexcal{\mathcal{X}'}

\newcommand\colorsdvi{\textcolor{sdvired}{SDVI} }
\newcommand\colorpyro{\textcolor{pyrogreen}{Pyro AutoGuide} }
\newcommand\colorbbvi{\textcolor{bbviblue}{BBVI} }

\title{Rethinking Variational Inference for Probabilistic Programs with Stochastic Support}

\author{%
  Tim Reichelt\textsuperscript{1} \qquad Luke Ong\textsuperscript{1,2} \qquad Tom Rainforth\textsuperscript{1}\\
  \textsuperscript{1} University of Oxford \\
  \textsuperscript{2} Nanyang Technological University, Singapore \\
  \texttt{\{tim.reichelt,lo\}@cs.ox.ac.uk \quad rainforth@stats.ox.ac.uk}
}

\begin{document}

\maketitle

\begin{abstract}
We introduce \emph{Support Decomposition Variational Inference} (SDVI), a new variational inference (VI) approach for probabilistic programs with stochastic support.
Existing approaches to this problem rely on designing a single global variational guide on a variable-by-variable basis, while maintaining the stochastic control flow of the original program.
SDVI instead breaks the program down into sub-programs with static support, before automatically building separate sub-guides for each.
This decomposition significantly aids in the construction of suitable variational families, enabling, in turn, substantial improvements in inference performance.
\end{abstract}

\section{Introduction}

Probabilistic programming systems (PPSs) enable users to express probabilistic models with computer programs and provide tools for inference.
Many PPS, such as Stan \citep{carpenter2017Stan} or PyMC3 \citep{salvatier2016Probabilistic}, limit the expressiveness of their language to ensure that the programs in their language always correspond to models with static support---i.e. the number of variables and their support do not vary between program executions.
In contrast, \emph{universal PPSs} \citep{tolpin2016Design,goodman2008Churcha,bingham2019Pyro,ge2018Turing,cusumano-towner2019Gen,mansinghka2014Venture,narayanan2016Probabilistic,goodman2014WebPPL,murray2018probabilistic} can encode programs where the sequence of variables itself---not just the variable values---changes between executions, leading to models with stochastic support. 
These models have applications in numerous fields, such as natural language processing \citep{manning1999Foundations}, Bayesian Nonparametrics \citep{richardson1997Bayesian}, and statistical phylogenetics \citep{ronquist2020Universal}.
A wide range of simulator-based models
similarly require such stochastic control flow~\citep{baydin2019Etalumis,le2017Inference,gram-hansen2019Hijacking}.

The effectiveness of PPSs is heavily reliant on the underlying inference schemes they support.
Variational inference (VI) is one of the most popular such schemes, both in PPSs and more generally~\citep{blei2017Variational,kucukelbir2015Automatic,agrawal2020advances}.
This popularity is due to its ability to use derivatives to scale to large datasets and high-dimensional models~\citep{zhang2018advances,hoffman2013Stochastica,rezende2015Variational,kingma2014AutoEncoding}, often providing much faster and more scalable inferences compared to Monte Carlo approaches \citep{brooks2011Handbook}.
To provide the required derivatives, a number of modern universal PPSs---such as Pyro \citep{bingham2019Pyro}, ProbTorch~\citep{siddharth2017learning}, PyProb~\citep{baydin2019Etalumis}, Gen~\cite{cusumano-towner2019Gen}, and Turing~\cite{ge2018Turing}---have introduced automatic differentiation \citep{baydin2018automatic} capabilities for programs with stochastic control flow.
One of the core aims behind these developments was to support VI schemes in such settings~\citep{bingham2019Pyro}.

\change{However, constructing appropriate variational families, typically known as guides in PPSs, can be very challenging for problems with stochastic support, even for expert users.  This is because the stochasticity of the control flow induces discontinuities and complex dependency structures that are difficult to remain faithful to and design parameterized approximations for.
Furthermore, while there are a plethora of different automatic guide construction schemes for static support problems \citep{blei2017Variational,kucukelbir2015Automatic,agrawal2020advances}, there is a lack of suitable schemes applicable to models with stochastic support.
Consequently, existing methods tend to give unreliable results in such settings, as we demonstrate in Figure~\ref{fig:motivating_example}.}

We argue that a significant factor of this shortfall is that standard practice---for both manual and automated methods---is to construct the guide on a variable-by-variable basis~\citep{wingate2013Automated,paige2016Automatic,vandemeent2016BlackBox,vandemeent2018Introduction}.
Namely, existing approaches generally use a single global guide that mirrors the control flow of the input program, then introduce a variational approximation for each unique variable.
This is problematic because control flows inherently introduce discontinuities into the program's density function, such that the conditional distribution of each variable will typically change significantly whenever the program path---that is the sequence of random variables---changes.
Thus it is extremely challenging to learn a single approximation for each variable that is appropriate across all paths.
Further, as the set of variables that exist can itself be stochastic, it is difficult for such guides to appropriately condition on previously sampled variables. 
Existing automated approaches, therefore, typically rely on mean-field assumptions~\citep{paige2016Automatic}, thereby forgoing any conditioning on the program path itself, consequently leading to poor approximations for most problems.

To overcome these difficulties, we propose \emph{Support Decomposition Variational Inference} (SDVI), a new VI approach based around a novel way of constructing the variational guide.
\change{SDVI ``rethinks'' the guide construction by breaking it down over \emph{paths}, instead of building it on a variable-by-variable basis.}
Specifically, by utilizing the fact that any program can be reformulated as a mixture of \emph{straight-line programs} (SLPs)~\citep{chaganty2013Efficiently,sankaranarayanan2013Static,zhou2020Divide}---each defined by a unique realization of the path---SDVI constructs the guide as a mixture of sub-guides with static support.
We show that optimizing the variational objective with this guide structure leads to a natural decomposition of the overall
optimization problem into independent sub-problems, each taking the form of a VI with static support.
The sub-guides can thus be effectively constructed and trained using more standard VI techniques, before being recombined to form our overall variational approximation.
To make SDVI accessible to a wide audience, we have implemented it in Pyro \citep{bingham2019Pyro}. 
We evaluate it on a set of example problems with synthetic and real-world data, finding that it provides substantial improvements over existing techniques.

\begin{figure}[t]
  \begin{subfigure}{0.51\textwidth}
    \begin{lstlisting}[numbers=none, basicstyle=\footnotesize\ttfamily]
def model():
  x = sample("x", Normal(0, 1))
  if x < 0:
    z = sample("z1", Normal(-3, 1))
  else:
    z = sample("z2", Normal(3, 1))
  sample("y", Normal(z, 2), obs=2.0)
    \end{lstlisting}
  \end{subfigure}
  \hfill
  \begin{subfigure}{0.4\textwidth}
  \centering
  \includegraphics[width=\textwidth, height=100pt, keepaspectratio]{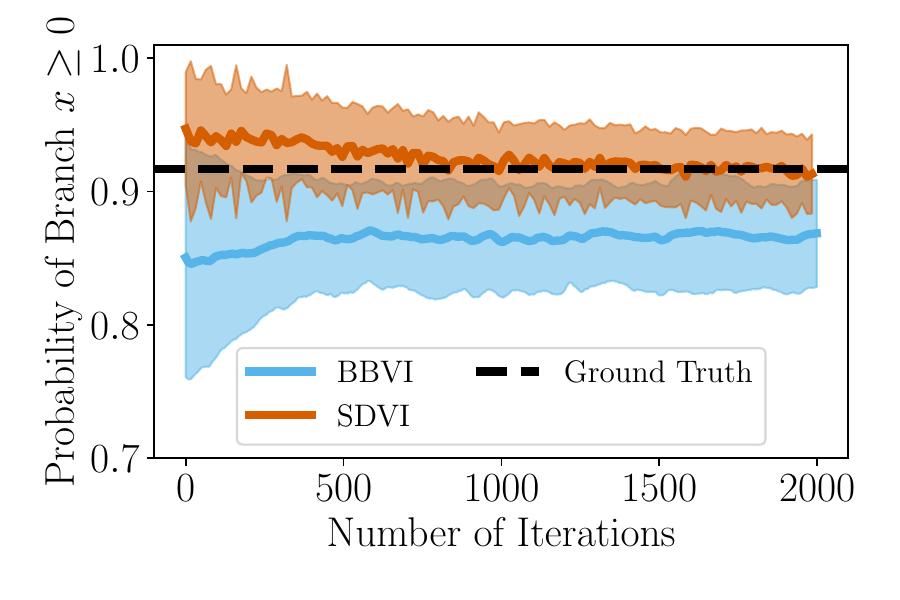}
  \label{fig:x_posterior}
  \end{subfigure}
  \vspace{-11pt}
  \caption{Pyro program with stochastic control flow [Left]. 
  \change{Existing procedures for automatically constructing the guide mirror the control flow of the input program [BBVI, \textcolor{bbviblue}{Blue}]. However, this produces an inherently limited variational family, leading to unsatisfactory performance despite the problem's simplicity.}
  By %
  breaking down the guide over paths, SDVI [\textcolor{sdvired}{Red}] is able to provide accurate inference.
  Results computed over $10^2$ replications, plotted %
  are mean and standard deviation.
  }
  \label{fig:motivating_example}
  \vspace{-14pt}
\end{figure}

\vspace{-7pt}
\section{Background}
\vspace{-4pt}
\subsection{Probabilistic Programs in Universal PPSs}
\label{sec:ppl_intro}
\vspace{-4pt}

PPSs allow users to express probabilistic models and condition on observed data \citep{gordon2014Probabilistic,vandemeent2018Introduction}.
A common mechanism to achieve this is to extend standard programming languages with two new primitives: \lstinline{sample} and \lstinline{observe}.\footnote{
Pyro instead overloads the \lstinline{sample} primitive: %
\lstinline{sample(id,dist,obs=data)}\,$\equiv$\,\lstinline{observe(id,data,dist)}.}
\lstinline{sample(id, dist)} draws samples from the distribution object \lstinline{dist}, where \lstinline{id} is a unique lexical identifier.
\lstinline{observe(id, data, dist)} enables conditioning on an observed outcome \lstinline{data}, where \lstinline{dist} and \lstinline{id} are as before.
For problems admitting a Bayesian formulation, the \lstinline{sample} and \lstinline{observe} terms can informally be thought of as prior and likelihood factors respectively.

\emph{Universal} PPSs %
allow users to write complex models whose support can vary from one execution to the next, e.g.~stochastic branching can mean certain variables only sometimes exist.
This can substantially complicate the process of performing inference.

A probabilistic program in a universal PPS defines an unnormalized \emph{density function} $\gamma(x_{1:n_x})$ over the \emph{raw random draws} $x_{1:n_x} \in \mathcal{X}$---defined as the (sequences of) direct outputs of \lstinline{sample} statements---where $n_x \in \mathbb{N}^+$ is itself potentially random. 
Though each outcome of $x_{1:n_x}$ uniquely defines a program execution, it is notationally convenient to further associate an \emph{address} $a_i$ to each draw $x_i$ that indicates the position in the program the draw was made.
This address can be uniquely defined as the tuple formed by the \lstinline{id} of the \lstinline{sample} and the number of times that \lstinline{sample} has previously been called.
For a given execution of the program, the addresses now form an \emph{address path} $A = a_{1:n_x}$. 

Each \lstinline{sample} statement encountered during execution contributes the factor $f_{a_i}(x_i \mid \eta_i)$ to the program density, where $a_i$ is the address of the \lstinline{sample} statement, $f_{a_i}$ is a parameterized density function, and $\eta_i$ are its associated parameters. 
Similarly, each encountered \lstinline{observe} statement contributes the factor $g_{b_j}(y_j \mid \phi_j)$, with $b_j$ denoting an address, $y_j$ the observed value, $g_{b_j}$ a parameterized density function, and $\phi_j$ its parameters.
Following \citep[§4.3.2]{rainforth2017Automating}, we write the program density function as
\begin{equation}
  \label{eq:ppl_density}
  \gamma(x_{1:n_x}) := \prod\nolimits_{i=1}^{n_x} f_{a_i}(x_i \mid \eta_i) \prod\nolimits_{j=1}^{n_y} g_{b_j}(y_j \mid \phi_j).
\end{equation}
All of $n_x, n_y, a_{1:n_x}$, $\eta_{1:n_x}$, $y_{1:n_y}$, $b_{1:n_y}$, and $\phi_{1:n_y}$ are potentially random variables.
The goal of \emph{inference} is to approximate the conditional distribution of the program, which has normalized density $\pi(x_{1:n}) = \gamma(x_{1:n}) / Z$ with marginal likelihood $Z = \int_{\mathcal{X}} \gamma(x_{1:n_x}) dx_{1:n_x}$ %
and the integral is computed with respect to a reference measure that is implicitly defined by the \lstinline{sample} statements in the program.

\subsection{Variational Inference}
\label{sec:vi_background}

Variational Inference (VI) \citep{wainwright2008graphical,blei2017Variational} solves the inference problem by transforming it to an optimization problem.
Specifically, given an unnormalized joint distribution $\gamma(x)$ and a parameterized distribution $q(x; \phi)$, VI computes the variational parameters $\phi$ such that $q(x; \phi)$ most closely approximates $\pi(x) = \gamma(x) / Z$.  %
This is most commonly done by maximizing
the \emph{Evidence Lower Bound} (ELBO)
     $\ELBO(\phi) := \mathbb{E}_{q(x; \phi)}\left[\log \nicefrac{\gamma(x)}{q(x; \phi)}\right]$
via stochastic gradient ascent using Monte Carlo estimates of $\nabla_{\phi} \ELBO(\phi)$ \citep{mohamed2020Monte}.
Two popular estimators are the score function estimator \citep{kleijnen1996Optimization,williams1992Simple}, and the reparameterized gradient estimator \citep{rezende2014Stochastic,kingma2014AutoEncoding,titsias2014Doubly}.
The latter provides lower variance gradient estimates but requires that the distribution $q(x; \phi)$ can be reparameterized and that $\gamma(x)$ is differentiable everywhere.

\section{Difficulties for Variational Inference in Universal PPSs}
\label{sec:difficulties}

The starting point for any VI scheme is to construct an appropriate variational family, also known as a guide.
\change{To automate inference, we desire to (at least partially) automate the process of constructing this guide.}
Existing methods for this all generate the guide on a variable-by-variable basis~\citep{wingate2013Automated,paige2016Automatic,vandemeent2018Introduction}: they introduce a single variational distribution $q_{a_i}(x; \phi_{a_i})$ for each unique sampling address $a_i$, then form the guide by replacing all the original random draws, $x_i\sim f_{a_i}(\,\cdot\mid\eta_i)$, with draws from the corresponding variational distribution instead.
This forms a global guide that maintains the stochastic dependency structure of the original program, such that the guide itself has stochastic support.

Our motivating insight is that this high-level approach has some fundamental limitations.
Consider the simple example from Fig.~\ref{fig:motivating_example}.
Here the variable \lstinline{x} influences the program's control flow.
This causes discontinuities that mean it is difficult to approximate its conditional density with a single variational approximation, especially if that variational approximation is restricted to a simple distribution class.
Here the different possible paths are essentially working against each other, as what is helpful for the approximation of \lstinline{x} on one path, is generally detrimental for the other.

A further complication occurs when the stochastic control flow of a program influences whether a variable exists at all.
Here it can become extremely challenging to set up guides which are faithful to the dependency structure of previously sampled variables~\citep{webb2018Faithful}, as the set of variables we condition on is itself stochastic.
Because of this, existing approaches typically rely on mean-field assumptions across paths~\citep{wingate2013Automated,paige2016Automatic,vandemeent2016BlackBox}.
However, this assumption is rarely reasonable given that the path typically strongly influences the distribution of individual variables.

Finally, creating a single unique variational approximation for each address also leads to challenging optimization problems: the same address can be present in multiple program paths, and the number of variables and their dependencies can vary between different paths.

\section{Support Decomposition Variational Inference}
\label{sec:methodology}

We now introduce a novel VI approach to overcome the challenges mentioned in Sec.~\ref{sec:difficulties}.
We call our method \emph{Support Decomposition Variational Inference} (SDVI), because the key design decision is the choice (and automatic construction) of a guide that takes the form of a mixture distribution over the set of possible paths a program can take.
That is, rather than constructing a single guide with the same stochastic control flow as the original program and a separate variational approximation for each \emph{unique address}, we instead construct separate sub-guides with deterministic control flows for each \emph{unique path}.
These are then combined into an overall guide using the mixture distribution which maximizes the overall ELBO.
As we will see, this alternative approach substantially simplifies the process of constructing effective guides and allows the full weight of the well-developed techniques for VI in the static support setting to be brought to bear on problems with stochastic support.

\subsection{Decomposing Probabilistic Programs into Straight-Line Programs}

As noted by, e.g.,~\citep{chaganty2013Efficiently,sankaranarayanan2013Static,zhou2020Divide}, all probabilistic programs can be reformulated as mixture distributions over \emph{straight-line programs} (SLPs), which are sub-programs without any stochastic control flow.
Building on our earlier notation, the constituent SLPs of a program correspond directly to the unique instances of the program path, $A$.
Given the path, the set of variables making up the raw random draws in the program is fixed, along with their form and reference distribution; that is each SLP represents a probabilistic model with fixed variable typing and  support.

Following the notation of~\citet{zhou2020Divide},
we can apply an arbitrary fixed ordering on the set of SLPs in a program, such that we can uniquely define and index them using the set of possible addresses $A_k$ for $k \in \mathcal{K}$, where $\mathcal{K}$ is a countable (but potentially infinite) indexing set.
Each SLP $A_k$ now corresponds to a particular sub-region, $\mathcal{X}_k$, of the raw random draw sample space, $\mathcal{X}$. 
These sub-regions are disjoint and their union is the full sample space.
Unlike $\mathcal{X}$, each element in any given $\mathcal{X}_k$ has the same length $n_k$ and is measurable with respect to the same reference measure.
The unnormalized density for the $k$th SLP is now given by
\begin{equation}
    \label{eq:slp_density}
    \gamma_{k}(x_{1:n_k}) 
    = \mathbb{I}[x_{1:n_k} \!\in\! \mathcal{X}_k] \, \gamma(x_{1:n_k}) 
    = \mathbb{I}[x_{1:n_k} \!\in\! \mathcal{X}_k] \prod\nolimits_{i=1}^{n_k} f_{A_k[i]}(x_i \mid \eta_i) \prod\nolimits_{j=1}^{n_{y}} g_{b_j}(y_j \mid \phi_j),
\end{equation}
and the unnormalized density function for the original program can be written as a simple sum of the individual SLP densities: $\gamma(x_{1:n_x}) = \sum_{k \in \mathcal{K}} \gamma_k(x_{1:n_x})$.
The corresponding normalized conditional density can then be written as mixture distribution over the conditional distributions of the individual SLPs, with mixture weights given by their (normalized) local partition functions:
\begin{equation}
    \label{eq:program_density_with_auxiliary}
\pi(x) = \sum_{k \in \mathcal{K}} \pi(x  \mid  k) \,  \pi(k) ~~ \text{where} ~~ \pi(x \mid k) = \frac{\gamma_{k}(x)}{Z_k}, \  \pi(k) = \frac{Z_k}{\sum_{\ell \in \mathcal{K}} Z_\ell}, \ Z_k = \int_{\mathcal{X}_k} \gamma_k(x) dx.
\end{equation}

\subsection{Decomposing the Variational Family into Straight-Line Programs}
\label{sec:define_kl}

The key idea behind SDVI is now to construct the guide using a factorization that is analogous to that of the SLP decomposition above.
Precisely, we aim to learn variational approximations of the form
\begin{equation}
\label{eq:guide_structure}
q(x;\phi,\lambda) = \sum\nolimits_{k=1}^K \qslp(x;\phi_k)q(k;\lambda)
\end{equation}
where $\qind(k;\lambda)$ defines a categorical distribution over the indices of the SLPs, with support $k \in \{1, \dots, K\}$; and $\qslp(x; \phi_k)$ is the local guide of the $k$th SLP, with support $x \in \mathcal{X}_k$. Critically, as each $\mathcal{X}_k$ represents a fixed support, the local variational families $q_k$ can be automatically constructed using standard techniques for static problems, as we discuss in Sec.~\ref{sec:local_guides}.
Note that it is valid for the guide $q(x;\phi,\lambda)$ to not cover all SLPs, i.e. it is possible that $K < |\mathcal{K}|$.

Writing $\phi = \{\phi_k\}_{k=1}^K$, the KL divergence we wish to minimize for standard VI is now
\begin{equation}
    \label{eq:global_program_kl}
    \KL(q(x; \phi, \lambda) \parallel  \pi(x)) = \mathbb{E}_{q(x; \phi, \lambda)} \left[ \log q(x; \phi, \lambda) - \log \pi(x) \right],
\end{equation}
which we call the \emph{global KL divergence}.
By standard reasoning, minimizing this is equivalent to maximizing the \emph{global ELBO}
\begin{equation}
\label{eq:global_elbo}
    \ELBO(\phi,\lambda) = \mathbb{E}_{q(x; \phi, \lambda)} \left[ \log \gamma(x) - \log q(x; \phi, \lambda) \right]
\end{equation}
which, as we show in App.~\ref{app:kl_derivation}, can be rewritten as
\begin{equation}
\label{eq:global_kl_in_terms_of_elbos}
\ELBO(\phi,\lambda)  = \mathbb{E}_{\qind(k; \lambda)} \left[ \ELBO_k(\phi_k) - \log \qind(k; \lambda) \right], ~~ \text{where} ~~~ 
\ELBO_k(\phi_k) := \mathbb{E}_{\qslp(x; \phi_k)} \left[ \log \frac{\gamma_{k}(x)}{\qslp(x; \phi_k)} \right]
\end{equation}
is the term we refer to as the \emph{local ELBO} for the $k$th SLP.
Notice that each $\ELBO_k(\phi_k)$ depends only on the parameter $\phi_k$ and the local SLP density $\gamma_k$; it is completely independent of $\qind(k; \lambda)$, the other SLPs, and $\phi_{k'}$ for $k' \neq k$.
Thus, it follows from~\eqref{eq:global_kl_in_terms_of_elbos} that the inference problem for the whole program can be decomposed into independent `local' inference problems for the component SLPs, along with establishing the mixture probabilities $q(k;\lambda)$.
Furthermore, it turns out that the optimal $\qind(k; \lambda)$ is simply the softmax of $\ELBO_1,\dots,\ELBO_K$, as shown by the following result.
\begin{restatable}{proposition}{optimalweights}
\label{prop:optimal_slp_weights}
Let $L = \{ \ELBO_1,\dots,\ELBO_K \}$ be the set of local ELBOs, defined as per~\eqref{eq:global_kl_in_terms_of_elbos}, where $L$ is countable but potentially not finite.
If $0<\sum_{k=1}^K \exp(\ELBO_k)<\infty$, then the optimal corresponding $q(k;\lambda)$ in terms of the global ELBO~\eqref{eq:global_elbo} is given by
\begin{equation}
    \label{eq:optimal_weights}
    \qind(k; \lambda) =  \exp(\ELBO_k) \, \big/ \, \sum\nolimits_{\ell = 1}^{K} \exp(\ELBO_{\ell}). 
\end{equation}
\end{restatable}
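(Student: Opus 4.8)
The plan is to fix the local parameters $\phi=\{\phi_k\}_{k=1}^K$ (each $\ELBO_k$ depends only on its own $\phi_k$, so optimizing the global ELBO~\eqref{eq:global_elbo} over $\lambda$ reduces to optimizing only the mixture weights) and to recognize the resulting objective as the classical Gibbs variational problem, whose maximizer is the Boltzmann distribution. Writing $w_k := q(k;\lambda)$ and using the decomposition~\eqref{eq:global_kl_in_terms_of_elbos}, the quantity to maximize over all categorical distributions $(w_1,\dots,w_K)$ on $\{1,\dots,K\}$ is $F(w) = \sum_{k} w_k \ELBO_k - \sum_k w_k \log w_k$. Setting $Z := \sum_{\ell=1}^K \exp(\ELBO_\ell) \in (0,\infty)$ by hypothesis and $p_k := \exp(\ELBO_k)/Z$, which is then a valid probability mass function on $\{1,\dots,K\}$, the key step is the algebraic identity
\[
F(w) = \sum_{k=1}^K w_k \log\frac{\exp(\ELBO_k)}{w_k} = \log Z - \sum_{k=1}^K w_k \log\frac{w_k}{p_k} = \log Z - \KL(w \parallel p).
\]
Gibbs' inequality gives $\KL(w\parallel p)\ge 0$ with equality iff $w=p$, so $F(w)\le \log Z$ with the bound attained uniquely at $w_k = p_k = \exp(\ELBO_k)/\sum_{\ell=1}^K\exp(\ELBO_\ell)$, which is exactly~\eqref{eq:optimal_weights}.

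When $\mathcal{K}$ (hence $K$) is countably infinite, the only extra work is to certify that the series manipulations are legitimate and that the supremum is genuinely attained rather than merely approached. I would do this with the elementary bound $\log t \le t-1$: with $t = p_k/w_k$ it yields $w_k\log(p_k/w_k)\le p_k-w_k$, so the positive parts of the summands $w_k\log(p_k/w_k)$ are dominated by $\sum_k p_k = 1<\infty$; hence $\sum_k w_k\log(p_k/w_k)$ is well-defined in $[-\infty,0]$ and equals $-\KL(w\parallel p)$, making the rearranged $F(w)$ well-defined in $[-\infty,\log Z]$. Strict positivity of each $\exp(\ELBO_k)$ makes $p$ fully supported, so Gibbs' inequality and its equality case apply verbatim, and $Z<\infty$ keeps $\log Z$—the optimal value—finite.

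The main obstacle is exactly this infinite-dimensional bookkeeping. In the finite case the result is a one-line Lagrange-multiplier calculation: stationarity of $F$ subject to $\sum_k w_k=1$ forces $\log w_k = \ELBO_k - 1 - \mu$, i.e. $w_k\propto \exp(\ELBO_k)$, and strict concavity of $F$ (the entropy term is strictly concave, the first term linear) certifies this as the unique maximizer. For a countable index set one must instead either justify passing the stationarity argument to the limit or, as above, bypass calculus entirely via the KL reformulation plus a domination argument. I would present the KL-based argument, since it handles both cases uniformly and makes the uniqueness of the optimizer immediate.
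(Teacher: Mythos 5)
Your proposal is correct and follows essentially the same route as the paper's own proof: both rewrite the global ELBO restricted to the mixture weights as $\log \sum_{k}\exp(\ELBO_k) - \KL\bigl(\qind(k;\lambda) \,\|\, \exp(\ELBO_k)/\sum_{\ell}\exp(\ELBO_\ell)\bigr)$ and conclude by non-negativity of the KL divergence, with the hypothesis $0<\sum_k\exp(\ELBO_k)<\infty$ guaranteeing the softmax is a valid pmf. The only difference is that you additionally justify the series rearrangement in the countably infinite case via the bound $\log t \le t-1$, a point the paper glosses over; this is a welcome but minor strengthening rather than a different argument.
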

The proof of this result is given in App.~\ref{app:kl_derivation}.
Though each of the $\ELBO_k$ terms here is itself intractable, they can be estimated efficiently and accurately by simple Monte Carlo.
We can thus straightforwardly construct $q(k;\lambda)$ once we have learned our local variational approximations: noting that these two processes are separable, $q(k;\lambda)$ is not needed until \emph{after} the individual $q_k$ are trained.

\subsection{Finding SLPs}
\label{sec:finding}

We have just shown how we can solve the VI problem of a probabilistic program with stochastic support by reducing it to a set of \emph{independent} and \emph{simpler} VI problems, each concerning an SLP, a program with static support.
However, we still need a mechanism to `discover' the SLPs, i.e.~extract the possible address paths from a program.

Here we first note that we only need to consider an SLP if it has a non-zero probability of being identified under forward simulation of the program, while ignoring conditioning statements.
This hints at a cheap and simple discovery mechanism whereby we draw samples by forward simulation and take note of the unique paths that have been generated. \change{We can either do this upfront, or in an online manner whereby we seek new SLPs as our budget increases and we have scope to deal with them (see App.~\ref{app:details_res_alloc} for details).
Although this is a stochastic process that is not guaranteed to find all the SLPs for finite budgets, for the problems considered in our experiments, it was always able to
reliably identify all SLPs with \emph{non-negligible posterior mass}.
Nonetheless, this approach may not be sufficient for all problems, such as when the likelihood concentrates in an area of very low prior mass.  Here one should instead look to employ
more sophisticated discovery methods instead, such as those based on MCMC sampling \citep{zhou2020Divide} or static analysis of the program code \citep{chaganty2013Efficiently,nori2015R2,beutner2022guaranteed}.
}

\begin{algorithm}[t]
    \caption{Support Decomposition Variational Inference}
    \label{alg:vi_res_alloc}
    \begin{algorithmic}[1]
        \Require Target program $\gamma$, iteration budget $T$, minimum no. of SH candidates $m$
        \State Extract SLPs $\{\gamma_k\}_{k=1}^K$ from $\gamma$ and set $\mathcal{C}=\{1,\dots,K\}$
        \Comment{Sec~\ref{sec:finding}} \label{alg_line:slp_discovery}
        \State Formulate guide $q_k$ for each SLP and initialize parameters $\phi_k$ \Comment{Sec~\ref{sec:local_guides}}
        \For{$l = 1, \dots, L = \lceil \log_2(K) - \log_2(m) +1\rceil $}
            \For{$k \in \mathcal{C}$}
                \State 
                Perform $\left\lfloor \nicefrac{T}{L |\mathcal{C}|} \right\rfloor$ optimization iterations of $\phi_k$ targeting $\ELBO_{{\rm surr},k}(\phi_k)$
                \Comment{Sec~\ref{sec:local_guides}} \label{alg_line:slp_optimization}
            \EndFor
            \State Remove $\min(\lfloor|\mathcal{C}| / 2 \rfloor, |\mathcal{C}|-m)$ SLPs from $\mathcal{C}$ with the lowest $\change{\ELBO_{k}(\phi_k)}$
            \Comment{Sec.~\ref{sec:allocating}}
        \EndFor
        \item Truncate $q_k$ outside of SLP support, $\mathcal{X}_k$, using Eq.~\eqref{eq:constrain}
        \item Estimate each $\mathcal{L}_k(\phi_k)$ using Monte Carlo estimate of Eq.~\eqref{eq:global_kl_in_terms_of_elbos}
        \State Calculate $\qind(k; \lambda)$ according to Eq.~\eqref{eq:optimal_weights} and return $q(x;\phi,\lambda)$ as per Eq.~\eqref{eq:guide_structure}
    \end{algorithmic}
\end{algorithm}

\subsection{Allocating Resources}
\label{sec:allocating}

Using the same amount of computational budget on each SLP is potentially wasteful, particularly if there is a large number of SLPs with insignificant marginal likelihoods.
Therefore, we seek a scheme that allocates more computational resources to promising SLPs, making sure to exploit the fact that the different inference problems are trivially parallelizable.

To formalize this resource allocation problem, let $T$ represent some fixed resource budget.
Further, let $t_k$ be the amount of this budget we spend on optimizing the $k$th SLP, such that $\sum_k t_k = T$ at the end of our training.
Our ultimate aim is produce the maximum possible final global ELBO, which will be a function of $\phi_1(t_1),\dots,\phi_K(t_K)$, where $\phi_k(t_k)$ denotes the value of $\phi_k$ achieved after allocating $t_k$ resources to that SLP.
By plugging the optimal mixture distribution $\qind(k; \lambda)$ from~\eqref{eq:optimal_weights} into~\eqref{eq:global_elbo}, we see that, after some rearranging, our resource allocation can be formulated as trying to maximize
\begin{equation}
    \label{eq:res_alloc_problem}
   \ELBO(\phi,\lambda^*) = \log \sum\nolimits_{k=1}^{K} \exp(\ELBO_k(\phi_k(t_k))) 
    \quad \text{s.t. } \sum\nolimits_{k=1}^{K} t_k = T.
\end{equation}
In practice, this is not a suitable objective for controlling our resource allocation directly, as it is still itself a random variable given $t_1,\dots,t_K$, because the optimization procedure is stochastic.
Moreover, we cannot consider its expectation, since the distribution of the $\phi_k(t_k)$ is unknown.
However, it does provide insight into how we ideally would like to allocate resources: we want to allocate more resources to SLPs whose exponentiated ELBOs are significant.
In particular, we can think of the `reward' for allocating $\epsilon$ more resources to SLP $k$  as $\exp(\ELBO_k(\phi_k(t_k+\epsilon)))-\exp(\ELBO_k(\phi_k(t_k)))$.

One could now, in principle, formulate the problem as a sequential decision making problem~\citep{lattimore2020bandit}.
However, the diminishing nature of the rewards and the fact that they are highly unlikely to be sub-Gaussian, along with the need to allow choosing multiple arms at once for parallelization, mean that setting up such an approach which is effective in practice is likely to be quite challenging.

Instead, we propose a simple heuristic, based on the \emph{Successive Halving} algorithm (SH) \citep{karnin2013almost} (see App.~\ref{app:details_res_alloc} for a description), an approach commonly used for resource allocation in hyperparameter optimization (HO) \citep{li2018hyperband,falkner2018bohb}.
In standard SH, the final objective is to identify and train the single best candidate, whereas ours is to maximize the \emph{sum} of all the local ELBOs.
Despite this difference, the use of SH can still be justified by the fact that the distribution over $\exp(\mathcal{L}_k(\phi_k(\infty)))$ will typically be heavily concentrated to a small number of SLPs, often only a single one.
Nonetheless, we make a small adaptation to the approach to stop over-focusing on a single SLP: we stop the halving process when a chosen number, $1\le m \le K$, of the candidates are left, with $m=1$ corresponding to standard SH.
The minimum proportion of the budget allocated to any given candidate by this scheme is $1/(K\lceil\log_2 K-\log_2 m+1\rceil)$, so we can use $m$ as a hyperparameter to control how evenly resources are allocated, with $m=K$ corresponding to uniform allocation.
This approach is also helpful for parallelization, as we can set $m$ equal to the number of available cores.

Putting everything together, a summary of our SDVI algorithm is given in Algo.~\ref{alg:vi_res_alloc}.
In App.~\ref{app:details_res_alloc}, we further show how this can be extended to an online variant of the approach, wherein we repeatedly run SH using the objective $\exp(\alpha \mathcal{L}_k(\phi_k(t_k)))/t_k$, where $0<\alpha\le 1$ is a hyperparameter, with smaller values of $\alpha$ encouraging more exploration.

\subsection{Formulating and Training the Local Guides}
\label{sec:local_guides}

In Algo.~\ref{alg:vi_res_alloc} we assume a mechanism to construct the \emph{local guide} $\qslp(x; \phi_k)$ for each SLP specified by path $A_k$.
In many situations---notably when the program path is uniquely determined by the sampled values from discrete distributions---it is possible to construct guides $\qslp$ that are guaranteed to place support within the sub-region $\mathcal{X}_k$, which, in turn, allows us to use the reparameterized gradient estimator for the gradients of $\ELBO_k(\phi_k)$.
Many models encountered in practice, e.g. mixture models \citep{richardson1997Bayesian}, have this property.
In this case it is possible to eliminate all the variables which influence the control flow by conditioning, effectively setting them to constants; see App.~\ref{app:local_guide_details} for further details.

In situations where we cannot easily construct a $\qslp$ which places support only within $\mathcal{X}_k$, we need to take care when training our guide.
Recall that for path $A_k$ the number of variables $n_k$ and their type sequence is fixed, which allows us to construct an initial guide $\tildeqslp$ with correct dimensionality and variable typing.
Let the support of this guide be denoted by $\tildexcal_k = \text{supp}(\tildeqslp)$.
In general, we will have $\mathcal{X}_k \subset \tildexcal_k$, because the control flow in the program imposes additional constraints on each individual variable.
Having constructed a guide with $\text{supp}(\tildeqslp) = \tildexcal_k$, one might be tempted to optimize $\KL(\tildeqslp(x; \phi_k) \parallel \pi(x  \mid  k))$, but we cannot guarantee the absolute continuity condition (i.e.~$\tildeqslp(x; \phi_k) = 0$ if $\pi(x  \mid  k) = 0$), and so, the KL divergence may not be well-defined, giving an ELBO of $- \infty$. 
To alleviate this issue we temporarily create a new surrogate target density defined as
\begin{equation}
    \label{eq:surrogate_density}
    \tilde{\gamma}_{k}(x_{1:n_k}) := \gamma_k(x_{1:n_k}) + c \, \mathbb{I}[x_{1:n_k} \notin \mathcal{X}_k],
\end{equation}
for a small positive, finite constant $c$.
This surrogate density is used solely for optimizing $\tildeqslp(x; \phi_k)$.
We \change{train $\phi_k$ to} optimize the corresponding surrogate ELBO
\begin{equation}
    \label{eq:surr_elbo}
    \ELBO_{{\rm surr},k}(\phi_k) := \mathbb{E}_{\change{\tildeqslp}(x; \phi_k)} \left[\log \tilde{\gamma}_{k}(x) - \log \tildeqslp(x; \phi_k) \right].
\end{equation}
We need to be careful to choose an appropriate $c$ that is sufficiently small compared to the values of $\gamma_k(x)$ for $x \in \mathcal{X}_k$, which we ensure by setting $c$ adaptively.
During the SLP discovery phase (Line~\ref{alg_line:slp_discovery} in Algo.~\ref{alg:vi_res_alloc}), we keep track of the smallest density value encountered so far, and call that $d_{min}$.
We then set $c = 0.01 d_{min}$ to ensure that the density values for $\tilde{\gamma}_k(x)$ outside of $\mathcal{X}_k$ are significantly below the values of $\tilde{\gamma}_k(x)$ for $x \in \mathcal{X}_k$.
Hence, optimizing~\eqref{eq:surr_elbo} faithfully optimizes $\qslp$ to be a good approximation to $\gamma_k(x)$ while avoiding the issues of infinite ELBO values.
While $\tilde{\gamma}_k$ is not a proper unnormalized density (it will in general not integrate to a finite value) this is not an issue in practice due to the mode-seeking behaviour of optimizing the ELBO.

Unfortunately, the bounds on the support of the SLP inevitably create a discontinuity in the objective.  
Thus, for fully unbiased gradients we need to use the score function estimator or some extension thereof.  
However, in some cases, the bias of the reparameterization gradient estimator may be sufficiently small to warrant its use.
Note that $\ELBO_{{\rm surr},k}(\phi_k)$ retains the desirable property of the standard ELBO that, if the observations are conditionally independent given the latent variables, we can get unbiased estimates of the ELBO using minibatches of the full dataset \citep{hoffman2013Stochastica,kucukelbir2015Automatic,titsias2014Doubly}.

Further we need to be careful when initializing $\tildeqslp$ as we require it to place sufficient probability mass within $\mathcal{X}_k$ to provide a suitable training signal. 
To ensure this, we initialize \change{$\phi_k$} by  minimizing the \emph{forward} KL divergence between the prior density of the $k$th SLP and $\tildeqslp(x; \phi_k)$ %
\begin{align}
    \label{eq:prior_density}
    \KL(\pi_{prior,k}(x) \parallel  \tildeqslp(x; \phi_k)) \ 
    &\propto \  \mathbb{E}_{\pi_{prior}(x)} \left[ - \mathbb{I}[x \in \mathcal{X}_k] \log \tildeqslp(x; \phi_k) \right]
\end{align}
where $\pi_{prior}(x_{1:n_x}) := \prod_{i=1}^{n_x} f_{a_i}(x_i \mid \eta_i)$. 
This objective can be optimized via stochastic gradient descent (cf. App.~\ref{app:local_guide_details}). 
Note that, for the purpose of initialization, we are targeting the prior, and thus we do not have to resort to expensive schemes to estimate the gradients which are necessary if one aims to minimize the forward KL targeting the posterior \citep{naesseth2020markovian}.

So far we have outlined how to train $\tildeqslp$ but to evaluate the local ELBOs, $\ELBO_k$, we need to construct a distribution $\qslp$ which satisfies the hard constraint $\text{supp}(\qslp) = \mathcal{X}_k$.
Our solution for this is \emph{truncating} $\tildeqslp$ by checking whether specific raw random draws $x'_{1:n_k}$ are valid for the path $A_k$, i.e.~whether $\mathbb{I}\left[x'_{1:n_k} \in \mathcal{X}_k \right]$.
We can do this by simply executing the program with fixed draws set to $x'_{1:n_k}$ and then noting that the program terminates and follows the address path $A_k$ if, and only if, $x'_{1:n_k} \in \mathcal{X}_k$.
Thus, we truncate \change{$\tildeqslp$ using}
\begin{equation}
\label{eq:constrain}
\qslp(x; \phi_k) = \frac{\tildeqslp(x; \phi_k)\mathbb{I}\left[x \in \mathcal{X}_k \right]}{\tilde{Z}_{k}\change{(\phi_k)}}, ~~ \text{where} ~~ \tilde{Z}_{k}\change{(\phi_k)} = \int_{\tildexcal_k} \tildeqslp(x; \phi_k)\mathbb{I}\left[x \in \mathcal{X}_k \right] dx.
\end{equation}
Hence, $\qslp$ is implicitly defined as the output of a rejection sampler with $\tildeqslp$ as a proposal.
\change{Note, that as we use the surrogate ELBO in~\eqref{eq:surr_elbo} when training $\phi_k$, we never need to take gradients through $\qslp$ or $\tilde{Z}_{k}(\phi_k)$, thereby avoiding the significant practical issues this would cause (see App.~\ref{app:direct_training}).}
Thus, the local guide $\qslp$ (Eq.~\eqref{eq:constrain}) is only used for estimating the local ELBOs (Eq.~\eqref{eq:global_kl_in_terms_of_elbos}).
This is done by first drawing $N$ samples $\{x^{(i)}\}_{i=1}^N$ from $\tilde{q}_k$, then rejecting samples which do not fall into the SLP and estimate $\tilde{Z}_k$ as the acceptance rate of this sampler (i.e. $N_A/N$ where $N_A$ is the number of samples accepted).  
Using $A$ to denote the set of indices of accepted samples, we form our ELBO estimate as
\begin{equation}
\hat{\mathcal{L}}_k := \frac{1}{N_A} \sum\nolimits_{i \in A} \log(N_A \, \gamma_k (x^{(i)})) - \log( N \, \tilde{q}_k(x^{(i)};\phi_k)).
\end{equation}
Note here that $A$ and $N_A$ are random variables that both implicitly depend on $\phi_k$, which is why we can use this for estimation, but not training.

\section{Related Work}

The vast majority of prior work on deriving automated VI algorithms focuses on the setting of static support \citep{ranganath2014Black,kucukelbir2017automatic,rezende2015Variational,ambrogioni2021Automatic,dhaka2021Challenges,agrawal2020advances,lee2018Reparameterization}.
Of particular note,~\citep{webb2018Faithful,paige2016inference,stuhlmuller2013learning} also consider using variational families that do not match the dependency structure of the original problem, but they still require static support.
More generally, there have been models with stochastic support for which bespoke guides where developed which do not follow the control-flow structure of the input program \citep{eslami2016Attend}.
However, these custom guides do not leverage the breakdown of the input program into SLPs.

The Divide-Conquer-Combine (DCC) algorithm \citep{zhou2020Divide} also exploits the breakdown of the program density into individual SLPs. 
However, \citep{zhou2020Divide} mainly focused on local inference algorithms that are sampling based, especially MCMC.
As we showed in Sec.~\ref{sec:methodology} unique challenges and opportunities arise when we consider the breakdown from a variational perspective.
Further, our work shows that using a variational family based on SLPs naturally leads to divide-and-conquer style algorithm, due to the resulting separability of the ELBO.
One of the most practical differences is that SDVI only requires (exponentiated) ELBOs to be estimated for each SLP, rather than marginal likelihoods.
The former can typically be estimated substantially more accurately for a given budget, allowing SDVI to scale better to high dimensional problems (see Sec.~\ref{sec:gmm_experiment}).
\citep{chaganty2013Efficiently} and \citep{sankaranarayanan2013Static} both also use the general idea of breaking down programs into SLPs, but both papers consider starkly different problem settings. 
Neither have any direct link to variational inference.

Our work is situated in the larger context of automated inference for universal PPSs. 
Other popular approaches include particle-based methods \citep{paige2014Compilation,ge2018Turing,wood2014New,rainforth2016interacting,murray2018probabilistic} and MCMC approaches with automated proposals~\citep{wingate2011Lightweight,le2016Inference,mak2021Nonparametric,carpenter2017Stan}.
Some work has looked to perform \emph{amortized} inference over a range of possible datasets~\cite{le2017Inference,paige2016inference,stuhlmuller2013learning,harvey2019attention}, often by training a proposal that is similar to a variational approximation.

\section{Experiments}
\label{sec:experiments}

To make SDVI easily accessible to practitioners we have implemented it in Pyro with code available at \href{https://github.com/treigerm/sdvi\_neurips}{\nolinkurl{github.com/treigerm/sdvi\_neurips}}.
The first baseline we consider, \textbf{Pyro AutoGuide}, uses the \texttt{AutoNormalMessenger} class to automatically generate a guide, and trains it with Pyro's built-in tools for VI (\url{http://pyro.ai/examples/svi_part_i.html}).
As an additional VI baseline, we also implement a custom guide for each model which uses the variable-by-variable scheme outlined in Sec.~\ref{sec:difficulties}, in combination with the score function gradient estimator; 
we refer to this baseline as \textbf{BBVI}.
For SDVI, we run SH until there are 10 active SLPs left (i.e. $m = 10$ in Algo.~\ref{alg:vi_res_alloc}) and parallelize the computation across $10$ cores. 
We further construct each local guide distribution $\qslp$ as a mean-field normal.
The specific configurations for each method for each experiment are provided in App.~\ref{app:details_experiments}.

\subsection{Program with Normal Distributions}
\label{sec:numerical_example}

\begin{figure}[t]
    \centering
    \begin{subfigure}[t]{0.45\textwidth}
        \centering
        \includegraphics[width=\linewidth]{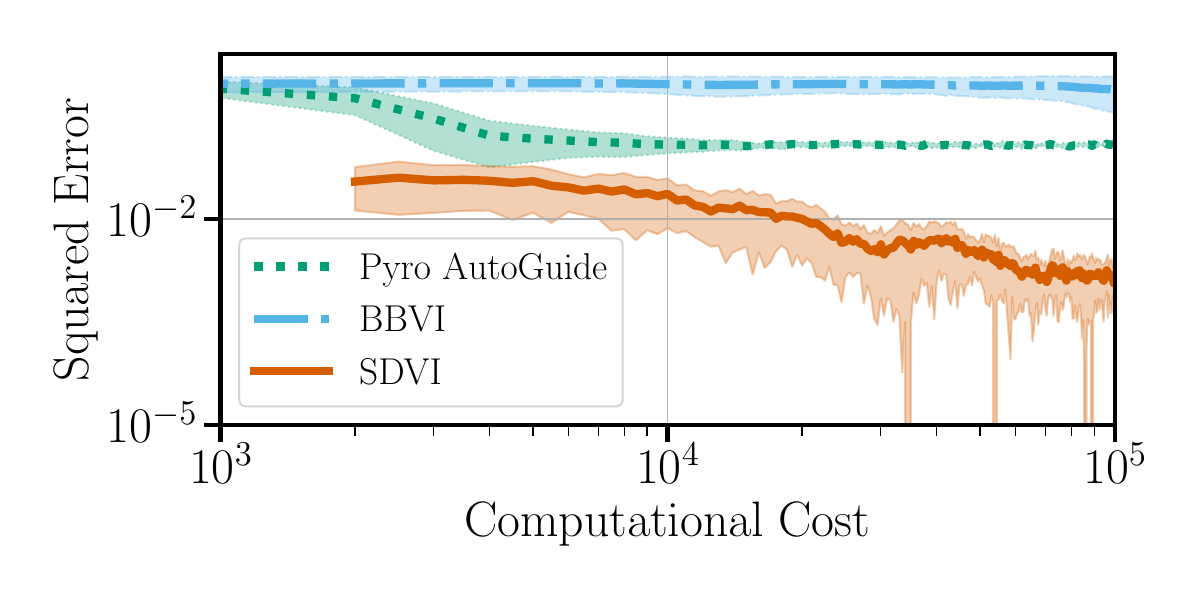}
         \vspace{-16pt}
        \caption{Squared error.}
        \label{fig:numerical_example_weights_error}
    \end{subfigure}
    \hfill
    \begin{subfigure}[t]{0.45\textwidth}
        \centering
        \includegraphics[width=\linewidth]{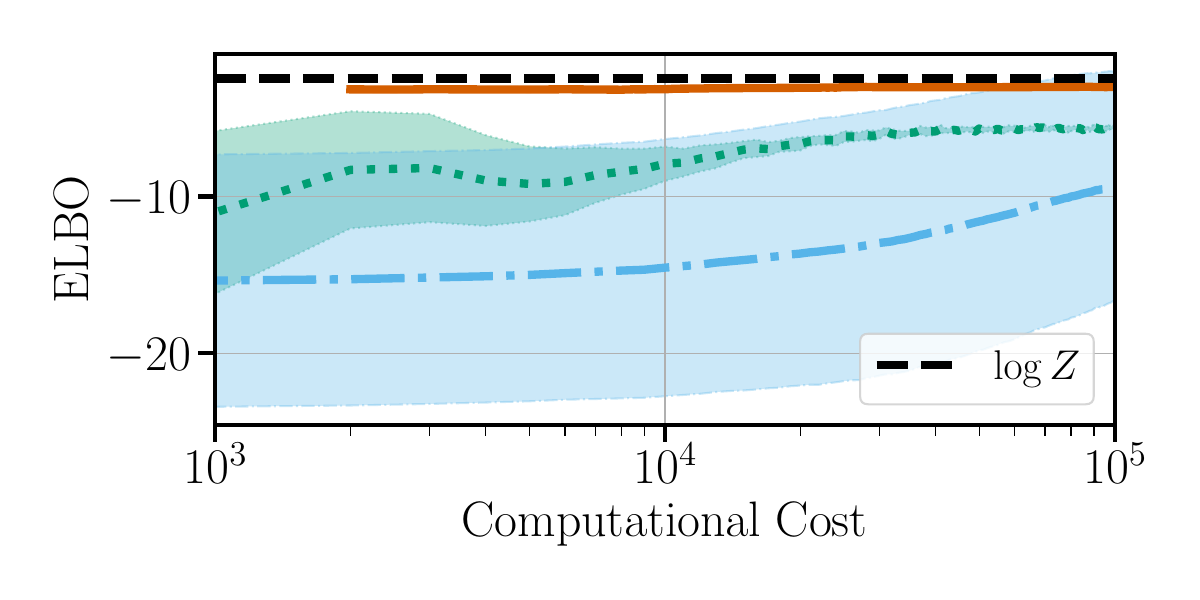}
        \vspace{-16pt}
        \caption{ELBOs.}
        \label{fig:numerical_example_elbos}
    \end{subfigure}
    \caption{Results for the model in §~\ref{sec:numerical_example}. Computational cost is measured in the number of likelihood evaluations. For each metric we show the mean and standard deviation over 10 runs. a) Squared error between the true SLP weights and the estimated SLP weights. b) Evidence Lower Bounds (ELBOs) for the variational algorithms, dashed line indicates the analytic log marginal likelihood.}
    \label{fig:numerical_example_results}
    \vspace{-10pt}
\end{figure}

We use our first experiment to further clarify the failure modes of existing VI approaches.
We consider an extension of the model from Fig.~\ref{fig:motivating_example} to contain more SLPs.
The full model is
\begin{equation}
    \label{eq:normal_model}
    \begin{split}
        u &\sim \mathcal{N}(0, 5^2), \\
        x &\sim \mathcal{N}(z, 1), \\
        y &\sim \mathcal{N}(x, 1).
    \end{split}
    \qquad
    \text{where}
    \qquad
    z = \begin{cases}
        0, &\text{if } u \in (-\infty, -4] \\
        K, &\text{if } u \in (-5 + K, -4 + K]  \text{ for } K = 1, \dots, 8 \\
        9, &\text{if } u \in (4, \infty) 
    \end{cases} 
\end{equation}
We assume we have observed $y=2$.
The results in Fig.~\ref{fig:numerical_example_results} demonstrate that SDVI is able to overcome the limitations of the other variational approaches.
\colorbbvi and \colorpyro both use the same guide in this model; \colorbbvi uses the score function gradient estimator for training, whereas \colorpyro uses the reparameterized gradient estimator.
This difference results in different posterior approximations for the different baselines.
The \colorbbvi guide tends to place all its mass on a single SLP and then provides a suitable approximation for only that one SLP, ignoring all the others.
This explains the large standard deviations for the ELBO values in Fig.~\ref{fig:numerical_example_elbos} as the ELBOs in different SLPs will converge to drastically different values.
For \colorpyro the biased gradient estimates will train the variational approximation for variable $u$ to be close to the prior $\mathcal{N}(0, 5^2)$.
\colorsdvi is able to avoid the shortcomings of the baselines as it provides an overall better posterior approximation leading to larger ELBO values, i.e.~lower KL divergences to the true posterior, and a more accurate weighting of the different SLPs (Fig.~\ref{fig:numerical_example_weights_error}).

\subsection{Infinite Gaussian Mixture Model}
\label{sec:gmm_experiment}

Our next model is a Gaussian Mixture Model (GMM) with a Poisson prior on the number of clusters:
\begin{equation*}
  K \sim \text{Poisson}(9) + 1; \quad  u_k \sim \mathcal{N}(\mathbf{0}, 10 \, \text{I}) \  \text{for } \; k = 1, \dots, K; \; \quad y \sim \frac{1}{K} \sum\nolimits_{k=1}^{K} \mathcal{N}(\mu_k, 0.1 \, \text{I}),
\end{equation*}
where $I$ is the $D \times D$ identity matrix and $\mathbf{0}$ is a $D$ dimensional vector of zeros (we set $D = 100$).
A similar model was considered in \citet{zhou2020Divide} but with $D=1$ instead of $D=100$.
We generate a dataset of 1250 observations with $K=5$.
To compare and evaluate the different algorithms, we hold out 250 data points as a test dataset to compute the log posterior predictive density (LPPD).

The Pyro AutoGuide baseline from the previous experiment is not applicable here since it assumes all latent variables are continuous.
In BBVI, for practical reasons, we had to cap the maximum number of clusters in the guide at 25 (cf. App.~\ref{app:details_experiments}).
To provide a further baseline, we have also implemented \textbf{DCC} \citep{zhou2020Divide} in Pyro with Random-walk lightweight Metropolis-Hastings (RMH) \citep{le2016Inference} as a local inference algorithm.
We chose DCC in particular because it also exploits the same breakdown into SLPs, so comparing against DCC is an opportunity to highlight the benefits of using a VI method.

In this model, the observations are assumed to be conditionally independent given the latent variables, thus enabling SDVI to work on subsets of the whole dataset \citep{hoffman2013Stochastica,kucukelbir2015Automatic}.
Specifically, we run SDVI on a model which samples a random minibatch of size $B=100$ at each iteration and then scales the likelihood by the factor $N / B$, where $N = 1000$ is the size of the full dataset; we refer to this setup as Stochastic SDVI (S-SDVI).
Furthermore, for this model SDVI is able to directly construct valid local guides $\qslp$ (using the mechanism for models branching on discrete variables outlined in Sec.~\ref{sec:local_guides}) and therefore (S-)SDVI can use the reparameterized gradient estimator.

Table~\ref{tab:gmm_results} shows that SDVI and S-SDVI significantly outperform the baselines, yielding a several orders of magnitude larger posterior predictive density and providing the only reasonable predictions for the numbers of clusters.
\change{In the few instances were (S-)SDVI returns a suboptimal MAP estimate of $K=6$, this was because the local guide for
the SLP with 5 components had fallen into a local model that fails to correctly identify all the clusters in the data, in turn returning a suboptimal local ELBO.}
BBVI and DCC struggle with this model due to the high-dimensional parameter space.
DCC's local inference algorithm, RMH, only updates one variable at a time which results in slow mixing times.
Note, DCC does not provide any ELBO values; its marginal likelihood estimator PI-MAIS \citep{martino2017Layered} constructs an importance sampling (IS) proposal distribution based on the outputs of MCMC chains which could theoretically be used to estimate an ELBO value.
However, as IS requires over-dispersed proposals, the ELBO scores for this approach are trivially $- \infty$, preventing a sensible comparison.

\begin{table}[t]
  \caption{Log posterior predictive densitiy (LPPD), ELBO, and maximum a posteriori (MAP) estimate for $K$ for GMM model. Mean and standard deviation for LPPD and ELBO computed over 5 runs.}
  \label{tab:gmm_results}
  \centering
  \begin{tabular}{llll}
    Method     & LPPD ($\uparrow, \times 10^3$) & ELBO ($\uparrow, \times 10^3$) & MAP $K$  \\
    \midrule
    DCC     & $-9842.90 \pm 3904.57$ & N/A & 14, 11, 16, 14, 15 \\
    BBVI    & $-2217.07 \pm 146.31$ & $-8770.55 \pm 544.95$ & 25, 25, 25, 25, 25 \\
    SDVI    & $\mathbf{32.84 \pm 0.02}$ & $\mathbf{128.76 \pm 0.17}$ & 5, 5, 6, 6, 5 \\
    S-SDVI    & $\mathbf{32.80 \pm 0.02}$ & $\mathbf{128.63 \pm 0.22}$ & 5, 5, 6, 5, 6 \\
    \bottomrule
  \end{tabular}
  \vspace{-5pt}
\end{table}

\subsection{Inferring Gaussian Process Kernels}
\label{sec:gp_experiment}

\begin{wrapfigure}{tr}{0.5\linewidth}
    \vspace{-48pt}
    \centering
    \includegraphics[width=\linewidth]{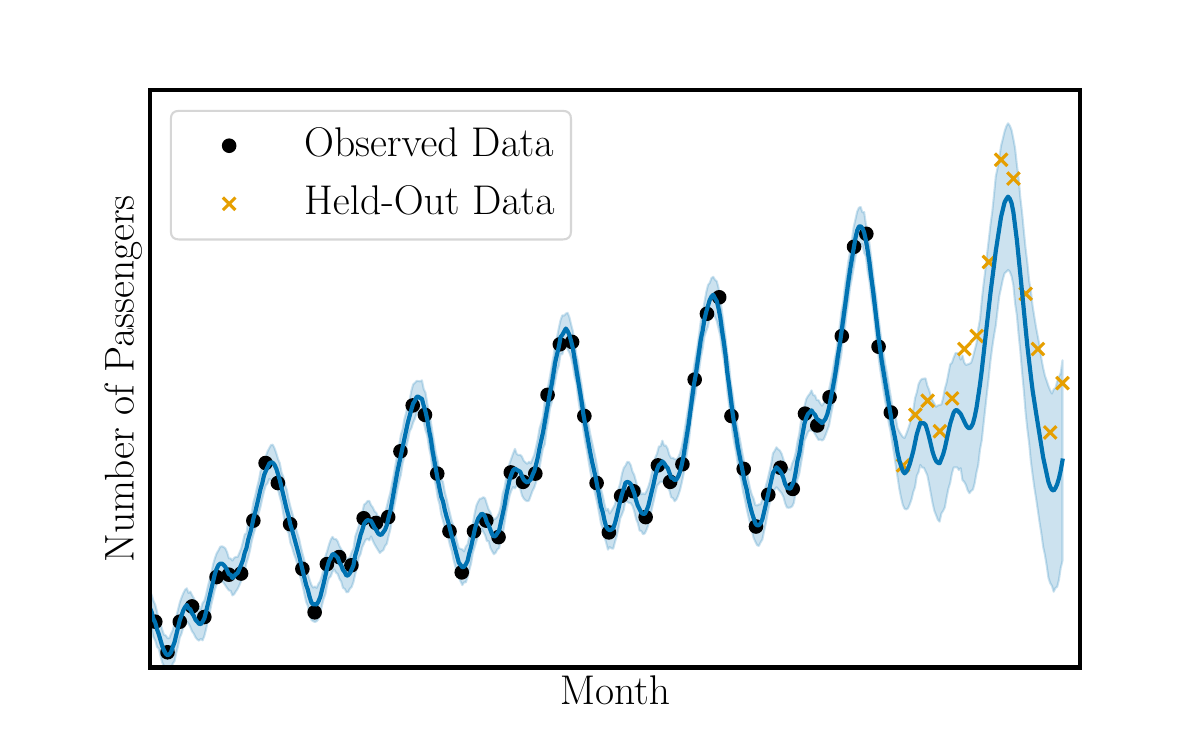}
    \caption{Posterior predictions of the GP for SDVI, shaded regions indicate 2 standard deviations that are computed from 100 posterior samples.}
    \label{fig:gp_predictions}
\end{wrapfigure}
For our final experiment, we consider the problem of inferring the kernel structure of a Gaussian Process (GP).
Following~\citep{duvenaud2013Structure,janz2016probabilistic}, we place a prior over kernel functions using a probabilistic context-free grammar (PCFG) .
We consider the squared exponential (SE), rational quadratic (RQ), periodic (PER), and linear (LIN) base kernels, and use the production rules
\begin{equation*}
    \mathcal{K} \rightarrow \  \text{SE} \  | \  \text{RQ} \  | \  \text{PER} \  | \  \text{LIN} \  | \  \mathcal{K} \times \mathcal{K} \  | \  \mathcal{K} + \mathcal{K}.
\end{equation*}
Sampling from the PCFG is implemented with a recursive probabilistic program that uses samples from a categorical distribution to decide which production rule in the PCFG should be applied.
In addition to the kernel structure, we also perform inference over the kernel hyperparameters for each base kernel and the observation noise; we place an inverse-gamma prior on each base kernel hyperparameter and a half-normal prior on the observation noise.
We further assume a normal likelihood function and marginalize out the latent GP.
Additional model details are in App.~\ref{app:details_experiments}.
We apply this model to a dataset of monthly counts of international airline passengers \citep{box2015time}, withholding the last 10 \% of all observations as a test dataset.

\begin{wraptable}{r}{0.5\linewidth}
  \vspace{-10pt}
  \caption{Final log posterior predictive densitiy (LPPD) and ELBO for GP model. Shown are mean and standard deviation computed over 5 runs.}
  \label{tab:gp_results}
  \centering
  \begin{tabular}{lll}
    Method     & LPPD ($\uparrow$) & ELBO ($\uparrow$)  \\
    \midrule
    DCC     & $-58.92 \pm 32.47$ & N/A \\
    BBVI    & $-18.82 \pm 1.20$ & $-48.48 \pm 0.33$ \\
    SDVI    & $\mathbf{2.05 \pm 3.30}$ & $\mathbf{34.53 \pm 21.42}$     \\
    \bottomrule
  \end{tabular}
  \vspace{-10pt}
\end{wraptable}
For SDVI we can construct local valid proposals $\qslp$ using the mechanism for models with discrete branching outlined in Sec~\ref{sec:local_guides}.
Hence, in each SLP the local guide $\qslp$ provides a posterior approximation over the kernel hyperparameters and the observation noise; the posterior distribution over kernel structures is implicitly defined through the mixture distribution over program paths.
Table~\ref{tab:gp_results} shows that SDVI provides higher LPPD values, 
and is also able to achieve a higher final ELBO value compared to BBVI.
Fig.~\ref{fig:gp_predictions} shows the posterior predictions for the SDVI run with the median LPPD score.
SDVI is able to provide qualitatively reasonable predictions, as the predictions follow the periodic trend in the observed data.

\section{Discussion}

We believe that SDVI provides a number of significant contributions towards the goal of effective (automated) inference for probabilistic programs with stochastic support, nonetheless it still naturally has some limitations. 
Perhaps the most obvious is that it, if there is a very large number of SLPs that cannot be easily discounted from having significant posterior mass, it can be challenging to learn effective variational approximations for all of them, such that SDVI is likely to perform poorly if the number becomes too large. 
Here, customized conventional VI or reversible jump MCMC approaches might be preferable, as they can be set up to focus on the transitions between SLPs, rather than trying to carefully characterize individual SLPs.

Another limitation is that our current focus on automation means that there are still open questions about how best to construct more customized guides within the SDVI framework. Here the breakdown into individual SLPs and use of resource allocation strategies will still often be useful, but changes to our implementation would be required to allow more user control and customization. 
For example, the discovery of individual SLPs using the prior is a potential current failure mode, and it would be useful to support the use of more sophisticated program analysis techniques (e.g. \citep{beutner2022guaranteed}).

A more subtle limitation is that the local inferences of each SLP can sometimes still be quite challenging themselves. 
If the true posterior places a lot of mass near the boundaries of the SLP, there can still be a significant posterior discontinuity, meaning we might need advanced local variational families (e.g. normalizing flows) and/or gradient estimators. 
Such problems also occur in static support settings and are usually much more manageable than the original stochastic support problem, but further work is needed to fully automate dealing with them.

Finally, variational methods are often used not only for inference, but as a basis for model learning as well. 
In principle, SDVI could also be used in such settings, but as described in App.~\ref{app:parameter}, there are still some hurdles that need to be overcome to do this in practice.

\vspace{-6pt}
\section{Conclusion}
\vspace{-3pt}

We have presented SDVI and shown that it is able to overcome the limitations of existing VI approaches for programs with stochastic support by using a novel guide structure that breaks the program down into SLPs with fixed support, rather than matching the original stochastic control flow.
The structure of the variational family separates the ELBO into multiple independent inference problems which naturally motivates a divide-and-conquer style training procedure with explicit resource allocation.
Experimentally we found that these innovations meant that SDVI was able to provide significant performance improvements over the previous state-of-the-art approaches.

\begin{ack}
We would like to thank Yuan Zhou for useful discussions in the early stages of this project.
Tim Reichelt is supported by the UK EPSRC CDT in Autonomous 
Intelligent Machines and Systems with the grant EP/S024050/1.
Luke Ong would like to acknowledge funding from EPSRC UK and National Research Foundation Singapore NRF-RSS2022-009.
\end{ack}

\bibliographystyle{unsrtnat}
\bibliography{references}

\newpage
\section*{Checklist}

\begin{enumerate}

\item For all authors...
\begin{enumerate}
  \item Do the main claims made in the abstract and introduction accurately reflect the paper's contributions and scope?
    \answerYes{}
  \item Did you describe the limitations of your work?
    \answerYes{Limitations are discussed in relevant sections throughout the paper.}
  \item Did you discuss any potential negative societal impacts of your work?
    \answerNA{The paper provides a generic variational inference algorithm for probabilistic models with stochastic support. As our contributions are methodological and our experiments do not contain any personalized data, we believe that this paper does not introduce any fundamentally new risks.}
  \item Have you read the ethics review guidelines and ensured that your paper conforms to them?
    \answerYes{}
\end{enumerate}

\item If you are including theoretical results...
\begin{enumerate}
  \item Did you state the full set of assumptions of all theoretical results?
    \answerYes{See Proposition~\ref{prop:optimal_slp_weights}.}
	\item Did you include complete proofs of all theoretical results?
    \answerYes{See Appendix~\ref{app:kl_derivation}.}
\end{enumerate}

\item If you ran experiments...
\begin{enumerate}
  \item Did you include the code, data, and instructions needed to reproduce the main experimental results (either in the supplemental material or as a URL)?
    \answerYes{See supplementary material.}
  \item Did you specify all the training details (e.g., data splits, hyperparameters, how they were chosen)?
    \answerYes{See Section~\ref{sec:experiments} and Appendix~\ref{app:details_experiments}.}
	\item Did you report error bars (e.g., with respect to the random seed after running experiments multiple times)?
    \answerYes{}
	\item Did you include the total amount of compute and the type of resources used (e.g., type of GPUs, internal cluster, or cloud provider)?
    \answerYes{See Appendix~\ref{app:details_experiments}.}
\end{enumerate}

\item If you are using existing assets (e.g., code, data, models) or curating/releasing new assets...
\begin{enumerate}
  \item If your work uses existing assets, did you cite the creators?
    \answerYes{}
  \item Did you mention the license of the assets?
    \answerNA{}
  \item Did you include any new assets either in the supplemental material or as a URL?
    \answerYes{Code in supplementary material.}
  \item Did you discuss whether and how consent was obtained from people whose data you're using/curating?
    \answerNA{}
  \item Did you discuss whether the data you are using/curating contains personally identifiable information or offensive content?
    \answerNA{}
\end{enumerate}

\item If you used crowdsourcing or conducted research with human subjects...
\begin{enumerate}
  \item Did you include the full text of instructions given to participants and screenshots, if applicable?
    \answerNA{}
  \item Did you describe any potential participant risks, with links to Institutional Review Board (IRB) approvals, if applicable?
    \answerNA{}
  \item Did you include the estimated hourly wage paid to participants and the total amount spent on participant compensation?
    \answerNA{}
\end{enumerate}

\end{enumerate}

\clearpage

\appendix

\thispagestyle{empty}
\rule{\textwidth}{1pt}
\vspace{-6pt}
\begin{center}
	\textbf{ \Large Appendix for Rethinking Variational Inference for Probabilistic Programs with Stochastic Support}
\end{center}\vspace{-6pt}
\rule{\textwidth}{1pt}
\begin{minipage}{\textwidth}
	\centering
	\vspace{17pt}
	\textbf{Tim Reichelt}\textsuperscript{1} \qquad \textbf{Luke Ong}\textsuperscript{1,2} \qquad \textbf{Tom Rainforth}\textsuperscript{1}\\
  \textsuperscript{1} University of Oxford \\
  \textsuperscript{2} Nanyang Technological University, Singapore \\
  \texttt{\{tim.reichelt,lo\}@cs.ox.ac.uk \quad rainforth@stats.ox.ac.uk}
	\vspace{-2pt}
\end{minipage}
\vspace{10pt}

\section{KL Divergence Derivation}
\label{app:kl_derivation}

\todo{Ensure all the notation is consistent with the main paper.}

\subsection{Breaking Down the Global ELBO}

The global ELBO is given by 
\begin{align}
    \ELBO(\phi,\lambda) &= \mathbb{E}_{q(x; \phi, \lambda)} \left[ \log \frac{\gamma(x)}{q(x; \phi, \lambda)} \right], \\
    &= \int_{\mathcal{X}} q(x; \phi, \lambda) \log \frac{\gamma(x)}{q(x; \phi, \lambda)} dx, \\
\intertext{using the fact that the subsets $\mathcal{X}_k$ provide a partition of $\mathcal{X}$ we can write the integral as}
    &= \sum_{k=1}^{K} \int_{\mathcal{X}_k} q(x; \phi, \lambda) \log \frac{\gamma(x)}{q(x; \phi, \lambda)} dx, \\
\intertext{using the factorization of $q(x; \phi, \lambda)$ and the fact that for $x \in \mathcal{X}_k$ the program density satisfies $\gamma(x) = \gamma_k(x)$ we get}
    &= \sum_{k=1}^{K} \int_{\mathcal{X}_k} \qslp(x; \phi_k) \qind(k; \lambda) \log \frac{\gamma_k(x)}{\qslp(x; \phi_k) \qind(k; \lambda)} dx \\
\intertext{then using the fact that $\qind(k; \lambda)$ does not depend on $x$ we have}
    &= \sum_{k=1}^{K} \qind(k; \lambda) \int_{\mathcal{X}_k} \qslp(x; \phi_k)  \log \frac{\gamma_k(x)}{\qslp(x; \phi_k)} dx - \log \qind(k; \lambda), \\
\intertext{which we can write concisely as}
    &= \mathbb{E}_{\qind(k; \lambda)} \left[ \ELBO_k(\phi_k) - \log \qind(k; \lambda) \right],
\end{align}
where
\begin{equation*}
    \ELBO_k(\phi_k) := \mathbb{E}_{\qslp(x; \phi_k)} \left[ \log \frac{\gamma_{k}(x)}{\qslp(x; \phi_k)} \right].
\end{equation*}

\subsection{Optimal Setting of $\qind(k; \lambda)$}

\optimalweights*
\begin{proof}
By the assumption that $0<\sum_{k=1}^K \exp (\ELBO_k)<\infty$, we have that $\nicefrac{\exp(\ELBO_k)}{\sum_{k=1}^K \exp (\ELBO_k)}$ forms a valid probability mass function over $k\in\{1,\dots,K\}$.
We can therefore rewrite ~\eqref{eq:global_kl_in_terms_of_elbos} as
\begin{align}
    \ELBO(\phi,\lambda) &= \mathbb{E}_{\qind(k; \lambda)} \left[ \log \frac{\exp(\ELBO_k)}{\sum_{k=1}^K \exp (\ELBO_k)} - \log \qind(k; \lambda) \right]
    +\log \sum_{k=1}^K \exp (\ELBO_k) \\
    &= -\KL\left(\qind(k; \lambda)  \parallel \frac{\exp(\ELBO_k)}{\sum_{k=1}^K \exp (\ELBO_k)}\right) + \log \sum_{k=1}^K \exp (\ELBO_k) \label{eq:optimal_weights_proof}
\end{align}
Now as second term in the above is constant in $q(k;\lambda)$ and a KL divergence is minimized when the two distributions are the same, we can immediately conclude the desired result that the optimal $q(k;\lambda)$ is 
\begin{equation}
    \qind(k; \lambda) =  \frac{\exp(\ELBO_k)}{\sum_{\ell = 1}^{K} \exp(\ELBO_{\ell})}.
\end{equation}

\end{proof}

Additionally, from~\eqref{eq:optimal_weights_proof} it follows that for the optimal setting of the mixture distribution $\qind(k; \lambda)$ the global ELBO is given by
\begin{equation*}
    \ELBO(\phi,\lambda^*) = \log \sum_{k=1}^K \exp (\ELBO_k).
\end{equation*}

\section{Details on Resource Allocation}
\label{app:details_res_alloc}

\subsection{Background on Successive Halving}
\label{app:sh}

Successive Halving (SH) divides a total budget of $T$ iterations into $L = \lceil \log_2(K) \rceil+1$ phases and starts by optimizing each of $K$ candidates, in our case the SLPs, for $\lfloor T/(KL) \rfloor$ iterations. 
It then ranks each of the candidates in terms of their performance, in our case the values of $\exp(\mathcal{L}_k)$, before eliminating the bottom half.
This process then repeats, with each of the remaining candidates run for $2^{\ell-1} T/(KL)$ iterations at the $\ell$-th phase.
This results in an exponential distribution of resources allocated to the different candidates, with more resources allocated to those that are more promising after intermediate evaluation.

Adapting it to our setting of treating the problem as a top-$m$ identification is done by simply using $L=\lceil \log_2(K) -\log_2(m)\rceil+1$ phases instead of $L=\lceil \log_2(K)\rceil+1$.

\subsection{Online Resource Allocation}
\label{app:principled_res_alloc}

Here, we present an online version of Algo.~\ref{alg:vi_res_alloc}, where the term `online' refers to the fact that the algorithm considers more and more SLPs as the computational budget increases.
The online variant of the algorithm is useful if a user is unsure about the total iteration budget that they want to spend on the input program.
This user might want to run SDVI with an initial iteration budget $T_1$ and after having observed the results, they might decide that they want to keep further optimizing the guide parameters.
We therefore need to adapt Algo.~\ref{alg:vi_res_alloc} so that it can be `restarted' after it has terminated.
A naive approach to this would be to simply run Algo.~\ref{alg:vi_res_alloc} again but re-use the $\qslp$'s for the SLPs that have already been discovered and only initialize the $\qslp$ from scratch for SLPs which have not been seen before.
However, this scheme is limited as it disproportionately favours SLPs which were discovered in the previous run.
This is because for those SLPs the local ELBOs will already be relatively large compared to the newly added SLPs.
As a consequence, SH will not assign significant computational budget to the SLPs that were added after the algorithm was restarted.

To safeguard against this behaviour we instead propose an online version of SDVI in Algo.~\ref{alg:online_res_alloc} which is using a modified `reward' for SH.
Instead of ranking the different SLPs according to $\ELBO_k(\phi_k(t_k))$ we instead propose the objective $\exp(\alpha \ELBO_k(\phi_k(t_k))) / t_k$ where $0 < \alpha \le 1$.
The reward is scaled by the reciprocal of $t_k$ because we are no longer aiming to select the SLPs with the highest $\ELBO_k(\phi_k(t_k))$ but instead aim to choose the SLPs which have been `underselected' compared to other SLPs, assuming we should have selected them in proportion to $\exp(\alpha \ELBO_k(\phi_k(t_k)))$.
The scaling by the scalar $\alpha$ is a further mechanism to encourage more exploration, with setting $\alpha = 0$ equivalent to uniform sampling in the limit of repeated SH runs.
Since this adapted objective takes into account the computational budget that was spent on each SLP, it is a more suitable objective when running SH repeatedly.

\begin{algorithm}[H]
    \caption{Online SDVI}
    \label{alg:online_res_alloc}
    \begin{algorithmic}[1]
        \Require Target program $\gamma$, iteration budget per SH run $T$, minimum no. of SH candidates $m$, parameter controlling $\alpha > 0$ exploration 
        \State Extract SLPs $\{\gamma_k\}_{k=1}^K$ from $\gamma$ and set $\mathcal{C}=\{1,\dots,K\}$
        \State Formulate guide $q_k$ for each SLP and initialize parameters $\phi_k$
        \State $t_k = 0$ for all $k \in \mathcal{C}$
        \While{Stopping criteria not satisfied}
            \State $\mathcal{C}' \leftarrow \mathcal{C}$
            \State Phases in successive halving $L = \lceil \log_2(|\mathcal{C}|) - \log_2(m) \rceil + 1$
            \For{$l = 1, \dots, L$}
                \State Number of iterations $n_l = \lfloor \frac{T}{L |\mathcal{C}'|} \rfloor$
                \For{$k \in \mathcal{C}'$}
                    \State Perform $n_l$ optimization iterations of $\phi_k$ targeting $\ELBO_{{\rm surr},k}(\phi_k)$
                    \State Estimate $\ELBO_{{\rm surr},k}(\phi_k)$ using Monte Carlo estimate of Eq.~\eqref{eq:surr_elbo}
                    \State $t_k = t_k + n_l$
                \EndFor
                \State Remove $\min(\lfloor|\mathcal{C}'| / 2 \rfloor, |\mathcal{C}'|-m)$ SLPs from $\mathcal{C}'$ with the lowest $\exp(\alpha \, \ELBO_{{\rm surr},k}(\phi_k)) / t_k$
            \EndFor
            \State Extract new SLPs from $\gamma$ and add them to $\mathcal{C}$, set $t_{k'} = 0$ for each new SLP with index $k'$
        \EndWhile
        \item Truncate $q_k$ outside of SLP support, $\mathcal{X}_k$, using Eq.~\eqref{eq:constrain}
        \item Estimate each $\mathcal{L}_k(\phi_k)$ using Monte Carlo estimate of Eq.~\eqref{eq:global_kl_in_terms_of_elbos}
        \State Calculate $\qind(k; \lambda)$ according to Eq.~\eqref{eq:optimal_weights} and return $q(x;\phi,\lambda)$ as per Eq.~\eqref{eq:guide_structure}
    \end{algorithmic}
\end{algorithm}

\section{Details for Training Local Guides}
\label{app:local_guide_details}

\subsection{Density Estimation of the Prior}

\todo{Note the fact that the term prior distribution is not completely appropriate since the $\eta_i$ might depend on observed data.}
Before we can define the KL divergence we first have to carefully define global and local prior distributions
We first define what we informally call the global `prior' distribution of the program as the product of all the terms added to the program density by the \lstinline{sample} statements
\begin{equation}
    \label{eq:global_prior_def}
    \pi_{\mathrm{prior}}(x_{1:n_x}) := \prod_{i=1}^{n_x} f_{a_i}(x_i|\eta_i).
\end{equation}
However, here we are using the term prior only informally, since~\eqref{eq:global_prior_def} is not a prior in the conventional Bayesian sense since the $\eta_i$ can be functions of the observed data $y$.
Note that here $n_x$ in~\eqref{eq:global_prior_def} is again a random variable since the raw random draws $x_{1:n_x}$ of the program do not necessarily have fixed length.
Then similarly we define local `prior' distributions
\begin{equation}
    \label{eq:local_prior_def}
    \pi_{\mathrm{prior},k}(x_{1:n_k}) := \frac{\mathbb{I}[x_{1:n_k} \in \mathcal{X}_k] \prod_{i=1}^{n_k} f_{A_k[i]}(x_i|\eta_i)}{Z_{\mathrm{prior},k}} = \frac{\mathbb{I}[x_{1:n_k} \in \mathcal{X}_k] \pi_{\mathrm{prior}}(x_{1:n_k})}{Z_{\mathrm{prior},k}}, ,
\end{equation}
where
\begin{equation}
    Z_{\mathrm{prior},k} := \int_{\mathcal{X}} \mathbb{I}[x \in \mathcal{X}_k] \, \pi_{\mathrm{prior}}(x) dx.
\end{equation}
Note that for our purposes we will never actually have to estimate $Z_{prior,k}$, we only defined it to ensure that $\pi_{prior, k}$ is a normalized density.
This allows us to define the forward KL divergence which we would like to optimize with respect to $\phi_k$
\begin{align}
    \KL(\pi_{\mathrm{prior},k}(x) \parallel  \tildeqslp(x; \phi_k)) 
    &= \mathbb{E}_{\pi_{\mathrm{prior},k}(x)} \left[ \log \frac{\pi_{\mathrm{prior},k}(x)}{ \tildeqslp(x; \phi_k)} \right] \\ 
    \intertext{which we can rewrite as}
    &= \mathbb{E}_{\pi_{\mathrm{prior},k}(x)} \left[ \log \pi_{\mathrm{prior},k}(x) \right] - \mathbb{E}_{\pi_{\mathrm{prior},k}(x)} \left[ \log \tildeqslp(x; \phi_k) \right]. \\
    \intertext{The first term is a constant with respect to $\phi_k$ and therefore does not affect the optimization}
    &\propto \  \mathbb{E}_{\pi_{\mathrm{prior},k}(x)} \left[ - \log \tildeqslp(x; \phi_k) \right], \\
    \intertext{then by the definition of $\pi_{\mathrm{prior},k}(x)$ in Eq.~\eqref{eq:local_prior_def} this is equivalent to}
    &= - \frac{1}{Z_{\mathrm{prior},k}} \mathbb{E}_{\pi_{\mathrm{prior}}(x)} \left[ \mathbb{I}[x \in \mathcal{X}_k] \log \tildeqslp(x; \phi_k) \right]. \\
    \intertext{Finally, $Z_{prior,k}$ is a constant with respect to $\phi_k$ and can be dropped}
    &\propto\  \mathbb{E}_{\pi_{\mathrm{prior}}(x)} \left[ - \mathbb{I}[x \in \mathcal{X}_k] \log \tildeqslp(x; \phi_k) \right].
    \label{eq:app_prior_density_est}
\end{align}

We can estimate the gradients of the objective in Eq.~\eqref{eq:app_prior_density_est} using a Monte Carlo estimator 
\begin{equation}
    \nabla_{\phi_k} \mathbb{E}_{\pi_{\mathrm{prior}}(x)} \left[ - \mathbb{I}[x \in \mathcal{X}_k] \log \tildeqslp(x; k, \phi_k) \right] \approx \frac{1}{N} \sum^N_{j=1}  \mathbb{I}[x^{(j)} \in \mathcal{X}_k] \nabla_{\phi_k} \log \tildeqslp(x^{(j)}; k, \phi_k)
\end{equation}
where $x^{(j)}$ are raw random draws generated by executing the input program forward.
These gradient estimates can then be used in a stochastic gradient descent optimization procedure.
In our experiments, we generate a fixed set of $N$ samples and re-use the same set of samples for the entire optimization process.
Other approaches are also possible such as periodically collecting a new set of samples and using local MCMC moves to collect samples instead of repeatedly sampling from the prior.

\subsection{Exploiting Program Structure: Discrete Branching Optimization}
\label{sec:exploit_structure}

In practice, many user-defined programs have structural properties which can be exploited to construct a valid local guide directly and deterministically (without resorting to the stochastic mechanism described in Sec.~\ref{sec:local_guides}).
Specifically, consider the class of programs whose program paths are determined by variables sampled from discrete distributions.
For these programs, we can assume that for each SLP ($k$th, say) there is an (ordered) set of indices $\branchind \subset \{1, \dots, n_k\} = I$ and a set of constants $r_{k,1}, \dots, r_{k, |\branchind|} \in \mathbb{Z}$ such that the local unnormalized densities are expressible as
\begin{equation*}
    \gamma_{k}(x_{1:n_k}) = \gamma(x_{1:n_k}) \prod_{l=1}^{|\branchind|} \mathbb{I} \left[ x_{\branchind[l]} = r_{k,l} \right]
\end{equation*}
where $\branchind[j]$ means the $j$th element in $\branchind$.
It follows that we can construct densities for the $k$th SLP on a subset of variables in $x_{1:n_k}$ by eliminating all the variables given by indices $\branchind$ (by instantiating them to constants). 
This is effectively equivalent to replacing the \lstinline{sample} statements corresponding to the variables which influence the control flow with \lstinline{observe} statements which induces a new program density that has the form
\begin{equation}
    \label{eq:discrete_branching_density}
    \tilde{\gamma}_k(x_{1:n'_{k}}) = \prod_{i = 1}^{n'_{k}} f_{A_k[\nonbranchind[i]]}(x_i|\eta_i) \prod_{l=1}^{|\branchind|} f_{A_k[\branchind[l]]}(r_{k,l} \mid \eta_{l})  \prod_{j=1}^{n_{y}} g_{b_j}(y_j \mid \phi_j)
\end{equation}
where $\nonbranchind := \left[1, \dots, n_k \right] \setminus \branchind$, %
and $n'_k := |\nonbranchind|$.
Furthermore, if all the remaining r.v.~are continuous distributions with support in $\mathbb{R}$ (i.e. $\text{supp}(f_{A_k[i]}) = \mathbb{R}$ for $i \in \nonbranchind$) then $\tilde{\gamma}_k(x_{1:n'_{k}})$ itself has support in $\mathbb{R}^{n'_k}$.
It is then straightforward to construct a guide $\qslp$ with support in $\mathbb{R}^{n'_k}$ using existing methods, and we can get gradient estimates using the reparameterization gradient estimator (assuming there are no more discontinuities in $\tilde{\gamma}_k$).

\todo{Explain discrete branching annotations.}

To realize the discrete branching optimization in our Pyro implementation we allow users to annotate the \lstinline{sample} statements which influence the branching.
While it is in principle possible to automatically identify programs with discrete branching using program analysis, formalizing and implementing such a program analysis tool to work with arbitrary Pyro program would be a significant contribution in itself which is out of scope for this paper as we are focused on the statistical evaluation of SDVI.
Specifically, the relevant \lstinline{sample} statements within a Pyro program can be annotated as follows: \lstinline|pyro.sample("x", dist.Poisson(7), infer={"branching": True})|.
Our implementation of SDVI is then able to use these annotations to create the density $\tilde{\gamma}_k$ in~\eqref{eq:discrete_branching_density}.

\section{Additional Details for Experiments}
\label{app:details_experiments}

For all experiments that rely on optimization we use the Adam optimizer \citepsupp{kingma2014adam}.
The experiments were executed on an internal cluster which uses a range of different computer architectures.

\subsection{Model From Figure~\ref{fig:motivating_example}}

\begin{lstlisting}[numbers=none,caption={Pyro Code for Figure~\ref{fig:motivating_example}.},label={lst:motivating_example_pyro}]
import pyro
import pyro.distributions as dist

def model():
    x = pyro.sample("x", dist.Normal(0, 1))
    if x < 0:
        z1 = pyro.sample("z1", dist.Normal(-3, 1))
    else:
        z1 = pyro.sample("z2", dist.Normal(3, 1))

    x = pyro.sample("x", dist.Normal(z1, 2), obs=torch.tensor(2.0))
 
guide = pyro.infer.autoguide.AutoNormalMessenger(model)
optim = pyro.optim.Adam({"lr": 0.01})
svi = pyro.infer.SVI(
    model, guide, optim, loss=pyro.infer.Trace_ELBO()
)

for j in range(2000):
    svi.step()
\end{lstlisting}

The full Pyro code for the model in Fig.~\ref{fig:motivating_example}, including automatically generating and training the guide is given in Listing~\ref{lst:motivating_example_pyro}.
The code for BBVI and SDVI is provided in the code supplementary.
For Pyro's AutoGuide and BBVI we run the optimization for $2000$ iterations with a learning rate of $0.01$.
Similarly, for SDVI we have a total iteration budget of $T = 2000$ and use a learning rate of $0.01$; we set the minimum number of SH candidates to $m=2$

\subsection{Program with Normal Distributions}

For SDVI, we use $10^3$ samples from the prior to discover SLPs.
To train the local guides to place support within the SLP boundaries we collect $10^2$ samples per SLP and optimize the objective in Equation~\eqref{eq:prior_density} for $10^3$ iterations. 
We run Algorithm~\ref{alg:vi_res_alloc} with a total budget of $T = 10^5$ with $5$ particles for the ELBO and to estimate the final SLP weights we use $10^3$ samples per SLP.
We use a learning rate of $0.01$.

For Pyro AutoGuide, we run the optimization for $10^5$ steps with $1$ ELBO particle. 
For BBVI, we run the optimization for $10^4$ steps with $10$ ELBO particles.
For both we use a learning rate of $0.01$.

\subsection{Infinite Gaussian Mixture Model}

For SDVI, we use $10^3$ samples from the prior to discover SLPs, run Algorithm~\ref{alg:vi_res_alloc} with a total budget of $T = 2 * 10^4$ with $10$ particles for the ELBO and to estimate the final SLP weights we use $10^2$ samples per SLP.
We use a learning rate of $0.1$.

For BBVI, we run for $2 * 10^4$ iterations using $10$ particles for the ELBO and a learning rate of $0.1$.
In the guide, we use a categorical distribution for number of components $K$ over the range $K \in [1, 25]$.
We ran initial experiments with instead using a Poisson distribution paramterized by the rate but we found this leads to an explosion in the number of components in the guide which resulted in the program running out of memory. 
For each $\mu_k$ the variational approximation is a diagonal Normal distribution parameterized by the mean and the diagonal entries in the covariance matrix. 

For DCC, we run for $200$ iterations, at each iteration we run $10$ independent RMH chains generating $10$ samples and to get a marginal likelihood estimate we use PI-MAIS \citepsupp{martino2017Layered} which places a proposal distribution (in our case a Gaussian) on the outputs of the RMH chains and samples from this proposal $M$ times; we set $M=10$.

\subsection{Inferring Gaussian Process Kernels}

\subsubsection{Model Details}

Our probabilistic context-free grammar for the kernel structure has the production rules
\begin{equation}
    \mathcal{K} \rightarrow \  \text{SE} \  | \  \text{RQ} \  | \  \text{PER} \  | \  \text{LIN} \  | \  \mathcal{K} \times \mathcal{K} \  | \  \mathcal{K} + \mathcal{K}.
\end{equation}
with the production probabilities $[0.2, 0.2, 0.2, 0.2, 0.1, 0.1]$. 
On each base kernel hyperparameter we place an $\text{InverseGamma}(\alpha = 2, \beta = 1)$ prior.
For each base kernel the specific hyperparameters we wish to do inference over are:\footnote{We use the same naming conventions as the Pyro Docs at \url{https://docs.pyro.ai/en/stable/contrib.gp.html##module-pyro.contrib.gp.kernels}.}
\begin{itemize}
    \item Squared Exponential (SE): Lengthscale
    \item Rational Quadratic (RQ): Lengthscale, Scale Mixture
    \item Periodic (PER): Lengthscale, Period
    \item Linear (LIN): Bias
\end{itemize}
Assuming we have $N$ observations with inputs $\mathbf{x} \in \mathbb{R}^N$ and outputs $\mathbf{y} \in \mathbb{R}^N$ our model can then be written as
\begin{align}
    \mathcal{K} \sim \text{PCFG}(), \quad  %
    \sigma \sim \text{HalfNormal}(0, 1), \quad %
    \mathbf{y} \sim \mathcal{N}(0, \mathcal{K}(\mathbf{x}) + \sigma^2 \text{I})
\end{align}
where $\text{PCFG}()$ samples a kernel (and its hyperparameters) from the probabilistic context-free grammar and $\mathcal{K}(\mathbf{x})$ is the $N \times N$ covariance matrix computed from kernel $\mathcal{K}$.

\subsubsection{Algorithm Configurations}

For SDVI, we use $10^3$ samples from the prior to discover SLPs, run Algorithm~\ref{alg:vi_res_alloc} with a total budget of $T = 10^6$ with $1$ particles for the ELBO and to estimate the final SLP weights we use $10^2$ samples per SLP.
We use a learning rate of $0.005$.

For BBVI, we run for $10^5$ iterations using $10$ particles for the ELBO and a learning rate of $0.005$.
The guide uses a log-normal distribution for the kernel hyperparameters and the observation noise, and for the discrete variables which influence the kernel structure we use categorical distributions.
For DCC, we run for $10^3$ iterations and otherwise use the exact same hyperparameters as in the Gaussian Mixture Model experiment.

\section{Additional Experimental Results}

\subsection{Program with Normal Distributions}

\begin{wrapfigure}{tr}{0.5\linewidth}
    \vspace{-20pt}
    \centering
    \includegraphics[width=\linewidth]{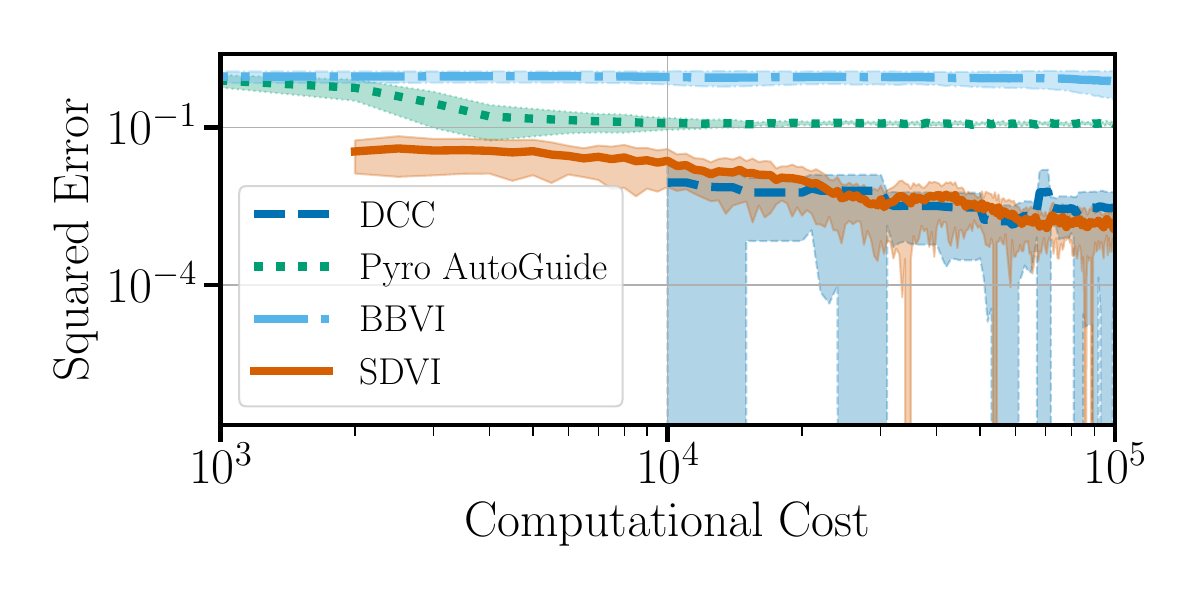}
    \vspace{-20pt}
    \caption{Squared error for the model in §~\ref{sec:numerical_example} with DCC baseline. Conventions as in Fig.~\ref{fig:numerical_example_weights_error}.}
    \label{fig:numerical_example_results_with_dcc}
    \vspace{-10pt}
\end{wrapfigure}
For completeness we include here the results for DCC on the model from Sec.~\ref{sec:numerical_example}.
DCC does not have the same fundamental limitations as the BBVI baselines therefore is competitive with SDVI and provides a similar squared error for the SLP weights.
In fact, it is quite impressive that SDVI is able to match the performance of DCC because DCC leverages marginal likelihood estimators which asymptotically converge to the true marginal likelihood whereas SDVI calculates the weights based on the ELBO.
This is therefore a further indicator that for this model SDVI is able to provide good posterior approximations for each SLP.

\section{Difficulties of Parameter Learning for Models with Stochastic Support}
\label{app:parameter}

In static support settings, one often uses variational bounds not only as a mechanism for inference, but also for training model parameters themselves~\citep{kingma2014AutoEncoding,rezende2014Stochastic}.
Using our notation from Sec.~\ref{sec:vi_background}, this setting corresponds to having model parameters, $\theta$, that we wish to optimize alongside the variational parameters, $\phi$, such that the unnormalized density can be written as $\gamma(x; \theta)$, with corresponding normalization constant $Z(\theta)$.
The ELBO then depends on both the variational and model parameters $\ELBO(\phi, \theta) := \mathbb{E}_{q(x; \phi)}\left[\log \nicefrac{\gamma(x; \theta)}{q(x; \phi)}\right]$.
Provided $Z(\theta)$ is differentiable with respect to $\theta$, both $\phi$ and $\theta$ can then, at least in principle, be simultaneously optimized using stochastic gradient ascent.

However, similar as to the case of pure inference, naively extending this scheme to models with stochastic support is non-trivial and quickly runs into both conceptual and practical problems.

Parameters, $\theta_k$, that are inherently local to only a single SLP can be dealt with straightforwardly: as $\nabla_{\theta_k} \ELBO_\ell = 0~ \forall \ell \neq k$ for such parameters, we can simply ignore parameters not associated with the SLP we are updating, that is we only take a gradient step for $\{\phi_k,\theta_k\}$ on Line~\ref{alg_line:slp_optimization} of Algo.~\ref{alg:vi_res_alloc}.

Problems start to occur, though, in the more common scenario where parameters are shared between SLPs, in the sense that they influence more than one $\gamma_k$.
Consider, for example, the GP model from Sec.~\ref{sec:gp_experiment} and assume that instead of doing inference over the observation noise, $\sigma$, we instead wish to treat this as a learnable parameter instead.
Here $\sigma$ could be seen as a `global' model parameter as it appears in every SLP, so could be viewed as shared between them.

This now creates an issue in `balancing' updates from different SLPs; the need to learn a shared $\theta$ breaks the separability between inference problems for individual SLPs.
Consequently, we can no longer directly treat how often we update each  SLPs as just a resource allocation problem: making more updates on a given SLP now increases the influence that SLP has on the $\theta$ which are learned.
This problem is unlikely to be insurmountable---one could maintain a running estimate of $q(k;\lambda)$ during training and then use this to either directly control the resource allocation or scale the updates of $\theta$ depending on how often the corresponding SLP has been used---but it does represent a notable complication that would require its own careful consideration. 

Beyond this specific practical challenge, there is also a more fundamental and general issue for parameter learning under stochastic support: should shared parameters be treated globally when we are learning them?
Going back to the example of the observation noise, $\sigma$, in our GP example, it will actually be quite inappropriate here to learn a single global value for $\sigma$, as the optimal observation noise will be different depending on the kernel structure.
Thus, though the variable is shared between SLPs in the program itself, it would be advantageous to learn separate values for it for each SLP, regardless of the inference approach we take.

The natural solution to this issue would be to perform parameter learning separately for each SLP, e.g. learning a separate $\sigma_k$ for each SLP in the GP example above.
However, this raises a variety of issues in its own, not least the fact that the inference algorithm will now start to influence the model itself: SDVI and BBVI will learn fundamentally different models.
There may also be settings where it is important for a parameter to be truly global and thus shared across the SLPs, e.g. because such sharing is an explicit prior assumption we wish to make.

Further problems occur when we consider that it is also feasible for learnable parameters to influence the control flow of the program, or even the set of possible SLPs.
For example, a learnable parameter could impact the maximum possible recursion depth of a recursive program.
This will create challenging interactions between SLPs: updates of one will influence the desirable behavior for the variational approximation of another.
In turn, this can substantially complicate the resource allocation process and even the SLP discovery process itself.

Together, these aforementioned issues demonstrate that parameter learning for models with stochastic support is a complex issue, requiring specialist consideration beyond the scope of the current paper.
\vspace{-5pt}

\section{Issues with Directly Training $\qslp$}
\label{app:direct_training}

A natural question one might ask with the SDVI method is why do we not directly train $\qslp$ to~\eqref{eq:global_kl_in_terms_of_elbos} by treating it as an implicit variational approximation defined by $\tildeqslp$? Namely, we can express~\eqref{eq:global_kl_in_terms_of_elbos} in terms of $\tilde{q}_k$ as follows
\begin{align}
 \mathcal{L}_k(\phi_k) = \log \tilde{Z}_k(\phi_k) + \frac{1}{\tilde{Z}_k(\phi_k)} \mathbb{E}_{\tilde{q}_k(x;\phi_k)}\left[ \mathbb{I}\left[x\in\mathcal{X}_k\right] \log \frac{\gamma_k(x)}{\tilde{q}_k(x;\phi_k)} \right],   
\end{align}
which, in principle, could be directly optimized with respect to $\phi_k$.

There are unfortunately two reasons that make this impractical.
Firstly, though $\tilde{Z}_k(\phi_k)$ can easily be estimated using Monte Carlo, we actually cannot generate conventional unbiased estimates of $\log \tilde{Z}_k(\phi_k)$ and $1/\tilde{Z}_k(\phi_k)$ (or their gradients) because mapping the Monte Carlo estimator induces a bias.  Second, this objective applies no pressure to learn a $\tilde{q}_k$ with a high acceptance rate, i.e. which actually concentrates on SLP $k$, such that it can easily learn a variational approximation that is very difficult to draw truncated samples from at test time.  

By contrast, using our surrogate objective in~\eqref{eq:surr_elbo} allows us to produce unbiased gradient estimates. Because of the mode seeking behaviour of variational inference, it also naturally forces us to learn a variational approximation with a high acceptance rate, provided we use a suitably low value of $c$.  If desired, one can even take $c \rightarrow 0$ during training to learn an approximation which only produces samples from the target SLP without requiring any rejection.  Figure~\ref{fig:acceptance_rates} shows that empirically we learn a $\tildeqslp$ with a very high acceptance rates for the problem in Section 6.1.

Note that the surrogate and true ELBOs are exactly equal for any variational approximation that is confined to the SLP (as these have $\tilde{Z}_k(\phi_k)=1$).  This does not always necessarily mean that they have the same optima in $\phi_k$ for restricted variational families, even in the limit $c\rightarrow 0$.  However, such differences originate from the fact that the trunctation can itself actually generalize the variational family (e.g. if $\tilde{q}_k$ is Gaussian, then $q_k$ will be a truncated Gaussians).  As such, any hypothetical gains from targeting~\eqref{eq:global_kl_in_terms_of_elbos} directly will always be offset against drops in the acceptance rate of the rejection sampler.

\begin{figure}[t]
  \centering
  \includegraphics[width=0.6\textwidth]{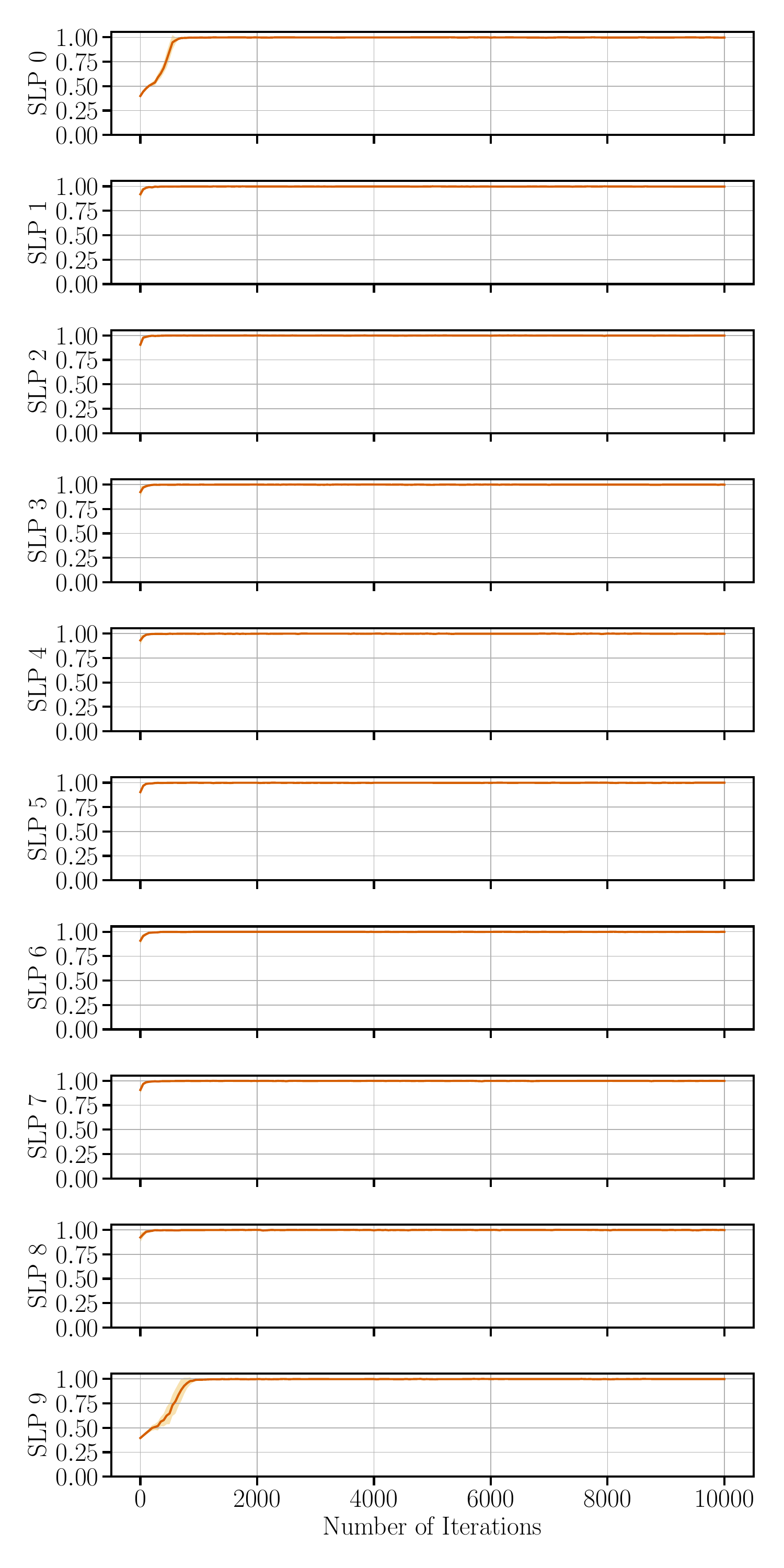}
  \caption{Acceptance rates for evaluating the local ELBOs in each SLP for the model from Sec.~\ref{sec:numerical_example}. Each plot represents a separate SLP; the plot with ``SLP {i}'' corresponds to the SLP with $z = \text{i}$ in Eq.~\eqref{eq:normal_model}. We can see that for all SLPs the acceptance rate approaches 1 with more iterations, confirming the mode seeking behaviour that arises when maximizing the surrogate ELBO in Eq.~\eqref{eq:surr_elbo}.}
  \label{fig:acceptance_rates}
\end{figure}

\bibliographysupp{references}
\bibliographystylesupp{unsrtnat}

\end{document}